\documentclass[twoside,11pt]{article}

\usepackage{multirow}
\usepackage{booktabs}

\usepackage{bm}
\usepackage{tcolorbox}
\usepackage{algorithm}
\usepackage{algorithmic}
\usepackage{xcolor}  % 颜色支持
\usepackage{amsmath}
\usepackage{amsfonts}

\usepackage{xcolor}

\usepackage{graphicx}
\usepackage{subcaption}

\usepackage{array}
\usepackage{booktabs}
\usepackage{enumitem}
\usepackage[abbrvbib, preprint]{ArXiv}
\usepackage[svgnames]{xcolor}
\hypersetup{
hidelinks,
colorlinks=true,
linkcolor=Indigo,
urlcolor=DeepSkyBlue4,
citecolor=Indigo
}
\usepackage{authblk}
\usepackage{amsmath} 

\def\vx{{\boldsymbol{x}}}
\def\vz{{\boldsymbol{z}}}
\DeclareMathOperator*{\argmax}{arg\,max}

\DeclareFixedFont{\myfont}{OT1}{ptm}{m}{n}{8pt}
\DeclareFixedFont{\myfontb}{OT1}{ptm}{bx}{n}{8pt}

\usepackage{marvosym}
\makeatletter
\def\@fnsymbol#1{\ifcase#1\or \text{\Letter}\or *\or \dagger\or \ddagger\else\@arabic{#1}\fi}
\makeatother
\usepackage{lastpage}
\jmlrheading{23}{\ Preprint 2026}{1-\pageref{LastPage}}{1/19}{2/5}{21-0000}{}

\firstpageno{1}

\begin{document}

\title{On the Learnability of Offline Model-Based Optimization: \\A Ranking Perspective}

% \author{\name Shen-Huan Lyu~\thanks{Corresponding author} \email lvsh@hhu.edu.cn \\
%        \addr Key Laboratory of Water Big Data Technology of Ministry of Water Resources,\\ College of Computer Science and Software Engineering, Hohai University, Nanjing, China\\
%        \addr Department of Computer Science, City University of Hong Kong, Hong Kong, China\\
%        \addr State Key Laboratory for Novel Software Technology,
%        Nanjing University, Nanjing, China
%        % \AND
%        % \name Author Two \email two@cs.berkeley.edu \\
%        % \addr Division of Computer Science\\
%        % University of California\\
%        % Berkeley, CA 94720-1776, USA
%        }

\author[1,2,3]{Shen-Huan Lyu}
\author[3]{Rong-Xi Tan}
\author[3]{Ke Xue}
\author[4]{Yi-Xiao He}
\author[1]{Yu Huang}
\author[2]{Qingfu Zhang}
\author[3,\thanks{Corresponding author: qianc@nju.edu.cn}]{Chao Qian}

\affil[1]{Key Laboratory of Water Big Data Technology of Ministry of Water Resources, College of Computer Science and Software Engineering, Hohai University, Nanjing, China}
\affil[2]{Department of Computer Science, City University of Hong Kong, Hong Kong, China}
\affil[3]{National Key Laboratory for Novel Software Technology, Nanjing University, Nanjing, China}
\affil[4]{School of Artificial Intelligence and Information Technology, Nanjing University of Chinese Medicine, Nanjing, China}

\editor{My editor}

\maketitle

\begin{abstract}
Offline model-based optimization (MBO) seeks to discover high-performing designs using only a fixed dataset of past evaluations. Most existing methods rely on learning a surrogate model via regression and implicitly assume that good predictive accuracy leads to good optimization performance. In this work, we challenge this assumption and study offline MBO from a learnability perspective. We argue that offline optimization is fundamentally a problem of ranking high-quality designs rather than accurate value prediction. Specifically, we introduce an optimization-oriented risk based on ranking between near-optimal and suboptimal designs, and develop a unified theoretical framework that connects surrogate learning to final optimization. We prove the theoretical advantages of ranking over regression, and identify distributional mismatch between the training data and near-optimal designs as the dominant error. Inspired by this, we design a distribution-aware ranking method to reduce this mismatch. Empirical results across various tasks show that our approach outperforms twenty existing methods, validating our theoretical findings. Additionally, both theoretical and empirical results reveal intrinsic limitations in offline MBO, showing a regime in which no offline method can avoid over-optimistic extrapolation.
\end{abstract}

\begin{keywords}
  offline model-based optimization, PAC learnability, ranking
\end{keywords}

\section{Introduction}

% **问题背景与应用动机** 基于模型的离线优化（MBO）是指仅利用预先收集的静态设计评估数据集，寻找能够最大化黑盒目标函数输入方案的问题。这一范式在涉及高成本或受限实验的应用中日益重要，尤其在蛋白质工程、材料发现和机械设计领域。在这些领域中，优化蛋白质序列、化学分子或机械结构等设计通常需要耗费大量实验成本，包括物理合成和测试分子或材料。离线MBO通过利用现有数据集推断有潜力的候选方案，而非通过新实验查询目标值，提供了一种数据驱动的解决方案。通过将搜索范围限制在现有数据包含的信息内，离线MBO使得在无法进行迭代在线实验的情况下仍能实现设计优化。该方法已成功应用于解决科学和工业领域的复杂黑盒优化问题，通过复用历史数据提出改进设计方案来加速发现进程。
Offline model-based optimization (MBO) refers to the problem of finding an input design that maximizes a black-box objective function using only a static, pre-collected dataset of evaluated designs~\citep{trabucco2022design,kim2025offline,qiansoo}. 
This paradigm has grown in importance for applications involving costly or limited experiments, particularly in protein engineering~\citep{reddy2024designing,chen2023bidirectional,takizawa2025safe}, materials discovery~\citep{shin2025offline}, and mechanical design~\citep{wang2025model}.
In these fields, optimizing a design such as a protein sequence, a chemical molecule, or a mechanical structure generally demands costly experimental efforts, including physically synthesizing and testing a molecule or material.
Offline MBO provides a data-driven solution by leveraging existing datasets to infer promising candidates, rather than querying the objective through new experiments.
By restricting search to the information contained in available data, offline MBO enables design optimization in situations where iterative online experimentation is infeasible. 
It has been applied successfully to solve complex black-box optimization problems in science and industry, accelerating discovery by reusing historical data to propose improved designs~\citep{hardware,shin2025offline}.

%% 此处安排一张插图来展示主流方法的问题
%\begin{figure}[t]
%	\centering
%	\includegraphics[width=\linewidth]{figs/offline_mbo.pdf}
%	\caption{Caption}
%	\label{fig:offline_mbo}
%\end{figure}

% **主流方法与现有可学习性假设** 基于这一范式，现有的大多数离线MBO方法都围绕通过逐点回归进行代理学习展开，即训练一个模型通过最小化离线数据集上的均方误差（MSE）来逼近未知目标函数~\citep{fu2021offline,qi2022data,chen2023bidirectional,hoang2024learning,dao2024boosting}。这一设计选择隐含了一个强可学习性假设：如果代理模型在现有数据上实现了足够低的MSE，那么优化该代理模型将产生高质量的设计。在这种观点下，离线MBO的失败通常归因于优化过程中遇到的分布外（OOD）区域中代理模型的泛化能力不足~\citep{gulrajani2021in}。因此，大量研究致力于减少OOD回归误差，例如通过显式正则化惩罚远离数据流形的自信预测~\citep{fu2021offline,yu2021roma}，基于集成的方法通过多个代理模型取平均来降低方差~\citep{yuan2023importance,chen2023parallel}，或采用不确定性感知优化策略避免利用高不确定性区域~\citep{fu2021offline,trabucco2021conservative}。尽管这些方法带来了实证上的改进，但它们基本保留了相同的核心假设：控制MSE足以实现可靠的优化。
Building upon this paradigm, most existing offline MBO methods center around surrogate learning via pointwise regression, where a model is trained to approximate the unknown objective function by minimizing mean squared error (MSE) on the offline dataset~\citep{fu2021offline,qi2022data,chen2023bidirectional,hoang2024learning,dao2024boosting}. Implicit in this design choice is a strong learnability assumption: If the surrogate achieves sufficiently low MSE on the available data, then optimizing this surrogate will yield high-quality designs. Under this view, failures of offline MBO are often attributed to poor generalization of the surrogate in out-of-distribution (OOD) regions encountered during optimization~\citep{gulrajani2021in}. Consequently, a large body of work has focused on mitigating OOD regression error. For example, through explicit regularization that penalizes confident predictions far from the data manifold~\citep{fu2021offline,yu2021roma}, ensemble-based methods that average over multiple surrogates to reduce variance~\citep{yuan2023importance,chen2023parallel}, or uncertainty-aware optimization strategies that discourage exploitation of high-uncertainty regions~\citep{fu2021offline,trabucco2021conservative}. While these approaches have led to empirical improvements, they largely preserve the same underlying assumption that controlling MSE is sufficient for reliable optimization.

% **问题的症结：** 然而，这种主流观点掩盖了一个更本质的问题：代理训练目标与离线MBO实际目标之间存在错位。离线MBO的目的并非精确预测设计空间中的函数值，而是找出具有最高真实目标值的设计方案。从这个意义上说，均方误差（MSE）通过惩罚绝对预测偏差来追求分数精确度，而离线MBO本质上依赖于分数排序——尤其是顶尖或高百分位设计的相对排序。对优化器而言，将接近最优的候选方案正确排序于明显次优方案之上，远比最小化逐点预测误差更重要。近期实证研究显示，较低的MSE未必带来更好的优化性能，尤其在搜索进入分布外区域时。在这些情况下，以保持相对排序为目标训练的代理模型往往优于MSE训练的模型，这表明离线MBO的有效性更多取决于维持高质量设计间正确排序的能力，而非全局回归精度。
However, this prevailing perspective obscures a more fundamental issue: A mismatch exists between the surrogate training objective and the actual goal of offline MBO. The purpose of offline MBO is not to accurately predict function values over the design space but to identify designs with the highest true objective values. In this sense, MSE focuses on score precision by penalizing deviations in absolute predictions, whereas offline MBO fundamentally depends on score ranking, especially among top-performing or high-percentage designs. \citet{tan2024offline} explicitly introduce ranking objectives and empirically demonstrate their advantages. From the optimizer's perspective, correctly ranking near-optimal candidates above clearly suboptimal ones is more important than minimizing pointwise prediction error. ROOT~\citep{dao2025root} reframe offline MBO as a distributional translation from low-value designs to high-value designs and establish a new SOTA performance. These works suggest that the effectiveness of offline MBO is governed more by the ability to maintain correct ranking among high-quality designs than by global regression accuracy.

% **研究空白与理论动机** 尽管有这些观察，离线MBO的可学习性仍然知之甚少。虽然最近的一些研究试图超越标准的均方误差回归范式~\citep{krishnamoorthy2023diffusion,tan2024offline,dao2025root}，但它们大多停留在算法层面，缺乏一个原则性的理论框架来解释分布偏移如何影响优化，或者替代训练风险如何与最终设计质量相关联。特别是，如何以与优化相关的方式描述分布外现象仍然不清楚。
Despite these observations, the learnability of offline MBO remains poorly understood. Although several recent works attempt to move beyond the standard MSE regression paradigm~\citep{pan2024model,tan2024offline,dao2025root,zhou2026learning}, such efforts remain largely algorithmic and lack a principled theoretical framework that explains how distribution shift impacts optimization or how surrogate training risk relates to final design quality.

% 本研究提出了一个关于离线基于模型优化的统一理论视角，既阐明了其算法优势，也揭示了根本性局限。我们首先证明：在面向优化的风险定义下，排序目标函数比回归损失具有更严格的泛化保证，这为排序方法在离线MBO中的实证成功提供了理论解释。随后，我们发现训练数据与近最优设计方案之间的分布失配是优化误差的主要来源，并通过数据分布重塑系统性地降低这种失配，从而提升基于排序的优化效果。最后，我们通过将不可约简的泛化误差与近优解和训练数据支撑集之间的几何分离相关联，揭示了离线MBO的内在局限，发现存在某些情况下任何离线方法都无法避免过度乐观的外推。在多样化设计任务上的大量实验验证了理论预测，并阐明了这些结论的实际意义。
In this work, we propose a unified theoretical framework for offline MBO that directly connects surrogate learning to final optimization. Within this framework, we establish three key theoretical contributions.
\begin{itemize}[itemsep=0pt, topsep=0pt, parsep=0pt] % 将相关间距设为0
    \item We show that ranking-based objectives admit strictly tighter generalization guarantees than regression losses, explaining why accurate prediction is not sufficient for successful offline optimization.
    \item We identify distributional mismatch between the training data and near-optimal designs as the dominant source of optimization error, and show that this error can be reduced by appropriately shaping the effective training data distribution.
    \item We characterize an intrinsic limitation of offline MBO by relating the irreducible error to the geometric separation between near-optimal solutions and the support of the training data, revealing regimes in which offline optimization is fundamentally unreliable.
\end{itemize}

Inspired by these theoretical insights, we develop a \textbf{D}istribution-\textbf{A}ware \textbf{R}anking (\textbf{DAR}) method to reduce the mismatch between the training data and near-optimal designs, which improves offline MBO performance. Extensive experiments across diverse tasks validate both the advantages and the intrinsic limitations predicted by our theory.

\section{Preliminary}\label{sec:preliminary}

% 后期可以考虑简化此章节的介绍，重点讲解公式

% 首先，我们对offline MBO进行形式化的描述。Offline MBO通常可以抽象成下面两个阶段，第一阶段利用离线数据集学习代理模型，第二阶段在代理模型上搜索最优参数。整个过程都是在给定的离线数据集上完成，不与真实目标函数交互。
% We provide a formal description of offline MBO. Offline MBO can generally be abstracted into the following two-stage process: Firstly, learning the surrogate from offline data to capture the input–output relationship. Secondly, performing optimization over this surrogate model to infer promising candidates expected to yield high values of the true objective while accounting for distributional and epistemic uncertainty due to the limited and potentially biased offline dataset.
We present a formal characterization of offline MBO. The framework can generally be formulated as a two-stage process: (1) learning a surrogate model from offline data to capture the input–output mapping; (2) performing an optimization over the learned surrogate to infer candidate solutions that are expected to achieve high values under the true objective, while accounting for both distributional shift and epistemic uncertainty arising from the limited and potentially biased offline dataset.

\paragraph{Problem setting:}
In the offline MBO setting, we consider a decision space $\mathcal{X} \subseteq \mathbb{R}^d$ and an unknown objective function $f: \mathcal{X} \rightarrow \mathbb{R}$. Direct evaluations of $f$ are assumed to be costly or inaccessible, and the optimizer only has access to a fixed offline dataset $S = \{(\vx_i, y_i)\}_{i=1}^m$, where each observation is generated by $y_i = f(\vx_i) + \epsilon_i$ with a noise $\epsilon_i$. The goal of offline MBO is to search an approximate maximizer $\vx^* \in \arg\max_{\vx \in \mathcal{X}} f(\vx)$ without any further interaction with the true function $f$. Because $f$ cannot be queried, the optimization must rely entirely on a learned surrogate model $h_\theta$ that approximates $f$ based on $\mathcal{D}$. 

\paragraph{Surrogate learning:}
In the first stage of offline MBO, a surrogate model is trained to approximate the unknown objective using only the offline dataset
$
S=\{(\vx_i,y_i)\}_{i=1}^m
$.
A hypothesis class $\mathcal{H}$ is selected, and a parametric model $h_\theta \in \mathcal{H}$ is fitted by minimizing an empirical surrogate loss:
\begin{equation*}
	\theta^*
	=
	\arg\min_{\theta\in\Theta}
	\ \mathcal{L}_S(h_\theta)
	+
	\Omega(\theta),
	\label{eq:surrogate_training}
\end{equation*}
where $\Omega(\theta)$ controls model complexity and $\mathcal{L}_S(h_\theta)$ denotes the empirical surrogate loss on $S$.
In this work, we consider two standard choices for $\mathcal{L}_S(\cdot)$.

(i) Mean squared error loss:
\begin{equation*}
	\mathcal{L}^{\mathrm{mse}}_S(h_\theta)
	=
	\frac{1}{m}
	\sum_{i=1}^m
	\ell_{\mathrm{mse}}
	\!\left(h_\theta(\vx_i),y_i\right),
	\label{eq:mse_empirical}
\end{equation*}
with the pointwise loss
\begin{equation*}
	\ell_{\mathrm{mse}}
	\!\left(h_\theta(\vx),y\right)
	=
	\big(h_\theta(\vx)-y\big)^2.
	\label{eq:mse_loss}
\end{equation*}

(ii) Pairwise ranking loss:

Let $\mathcal{P}_S:=\big\{(i,j)\in[m]^2:\ y_i>y_j\big\}$ denote the set of ranked index pairs induced by the dataset.
The empirical ranking loss is defined as
\begin{equation*}
	\mathcal{L}^{\mathrm{rank}}_S(h_\theta)
	=
	\frac{1}{|\mathcal{P}_S|}
	\sum_{(i,j)\in \mathcal{P}_S}
	\ell_{\mathrm{rank}}
	\!\left(h_\theta;\vx_i,\vx_j\right).
	\label{eq:rank_empirical}
\end{equation*}
A standard 0--1 pairwise loss is
\begin{equation*}
	\ell_{\mathrm{rank}}
	\!\left(h_\theta;\vx,\vx'\right)
	=
	\mathbf{1}
	\!\left\{
	h_\theta(\vx)
	\le
	h_\theta(\vx')
	\right\},
	\label{eq:pairwise_loss}
\end{equation*}
and a commonly used smooth surrogate takes the form
\begin{equation*}
	\tilde\ell_{\mathrm{rank}}
	\!\left(h_\theta;\vx,\vx'\right)
	=
	\phi
	\!\left(
	h_\theta(\vx')
	-
	h_\theta(\vx)
	\right),
	\label{eq:smooth_pairwise_loss}
\end{equation*}
where $\phi$ is a monotone margin function. The surrogate $h_{\theta^*}$ serves as an available proxy for the unknown objective.

\paragraph{Optimization:}
In the second stage of offline MBO, the learned surrogate $h_{\theta^*}$ serves as a tractable stand-in for the true objective, and the goal is to identify a design
\begin{equation*}
    \hat{\vx}_{\mathrm{MBO}}
    =
    \argmax_{\vx\in\mathcal{X}}
    h_{\theta^*}(\vx),
\end{equation*}
using an optimization procedure $\mathcal{A}(h_{\theta^*},\vx_0)$ initialized at some initial points $\vx_0$. When $h_{\theta^*}$ is differentiable, gradient-based methods are commonly applied. A typical update takes the projected gradient-ascent form
\begin{equation*}
    \vx_{t+1}
    =
    \Pi_{\mathcal{X}}\!\left(\vx_t + \eta_t \nabla_\vx h_{\theta^*}(\vx_t)\right),
    \  t=0,1,\dots,T,
\end{equation*}
where $\eta_t$ is a step size and $\Pi_{\mathcal{X}}$ represents a projection onto the feasible domain.

\section{Theoretical Results}\label{sec:theory}

%% 此处要有一段summary。介绍本section
This section studies the learnability of offline MBO from an optimization-oriented perspective.
We show that pairwise ranking losses yield strictly tighter guarantees than standard regression losses.
Our analysis further identifies distributional mismatch between the offline data and the near-optimal region as the dominant source of error, and characterizes an intrinsic limitation of offline MBO in terms of the geometric separation between near-optimal designs and the data manifold. Detailed proofs are in Appendix~\ref{sec:app-proof}.

\subsection{Learnability of Rank-based Offline MBO}

%% 说明MBO与回归任务的本质区别，因此要先定义一些理论需要的概念：MBO的泛化误差，Rademacher复杂度
% 与旨在恢复整个定义域上未知函数$f$精确逐点近似的标准监督回归不同，离线MBO具有根本不同的目标：优化器寻求具有高真实值$f(\vx^*)$的设计$\vx^*$，因此主要通过代理模型$h_{\theta^*}$的排序或保序行为来依赖它~\citep{tan2024offline}。换句话说，精确的全局预测对于成功优化既非必要也不充分；重要的是代理模型能正确区分优质设计与劣质设计，尤其是在与优化相关的区域。这一区别促使我们从经典回归误差指标转向以优化为导向的度量，这些度量量化了代理模型保持设计相对排序的能力。在本节中，我们形式化了这种基于排序的误差，引入了定义它的成对分布，并阐述了支持我们泛化分析的几个学习理论标准概念。
Unlike standard supervised regression, which aims to recover an accurate pointwise approximation of the unknown function $f$ over the entire domain, offline MBO has a fundamentally different objective: the optimizer seeks a design $\vx^*$ with high true value $f(\vx^*)$, and thus depends on the surrogate $h_{\theta^*}$ primarily through its ranking or order-preserving behavior~\citep{tan2024offline}. In other words, what matters is that the surrogate correctly distinguishes high-quality designs from inferior ones, especially in the regions relevant for optimization. This distinction motivates a shift from classical regression error metrics toward optimization-oriented measures that quantify the surrogate's ability to preserve the relative ranking of designs. In this section, we formalize such a ranking-based error, introduce the pairwise distributions that define it, and state several standard notions from learning theory that will support our generalization analysis.

\begin{definition}
For any tolerance level $\varepsilon \ge 0$, define the $\varepsilon$-near-optimal set
\begin{equation*}
    \mathcal{X}_\varepsilon
    :=
    \{\vx \in \mathcal{X} : f(\vx^*) - f(\vx) \le \varepsilon\}
\end{equation*}
and its complement $\mathcal{X}_{>\varepsilon} := \mathcal{X}\setminus\mathcal{X}_\varepsilon$.
We denote by $\rho_\varepsilon$ and $\rho_{>\varepsilon}$ the conditional distributions of designs in $\mathcal{X}_\varepsilon$ and $\mathcal{X}_{>\varepsilon}$, respectively.
\end{definition}

The theoretical error of interest evaluates whether the surrogate $h_\theta$ preserves the ranking between near-optimal and clearly suboptimal designs. Therefore, we propose the following definition of optimization-oriented ranking error.
% 为此，我们给出以下optimization-oriented排序误差的定义

\begin{definition}
Let the target pair distribution be
\begin{equation*}
    Q_\varepsilon(\vx,\vx') := \rho_\varepsilon(\vx)\,\rho_{>\varepsilon}(\vx'),
\end{equation*}
and the ranking error of a surrogate $h_\theta$ is defined as
\begin{equation*}
    \mathcal{E}^{\mathrm{rank}}_\varepsilon(\theta)
    :=
    \mathbb{E}_{(\vx,\vx')\sim Q_\varepsilon}
    \big[\ell_{\mathrm{rank}}(h_\theta;\vx,\vx')\big],
\end{equation*}
which measures the probability that the surrogate incorrectly ranks a near-optimal design below a strictly worse one.
\end{definition}

To analyze how minimizing empirical losses affects $\mathcal{E}^{\mathrm{rank}}_\varepsilon$, we rely on standard tools from statistical learning theory. In particular, uniform convergence arguments are expressed through the Rademacher complexity of a hypothesis class.

\begin{definition}
Let $\mathcal{F}$ be a class of real-valued functions on a domain $\mathcal{Z}$, and let $Z=\{z_1,\dots,z_m\}$ be a sample drawn from $\mathcal{Z}$. The empirical Rademacher complexity of $\mathcal{F}$ with respect to $Z$ is
\begin{equation*}
    \hat{\mathfrak{R}}_Z(\mathcal{F})
    :=
    \mathbb{E}_\sigma
    \left[
        \sup_{f\in\mathcal{F}}
        \frac{1}{m}\sum_{i=1}^m
        \sigma_i f(z_i)
    \right],
\end{equation*}
where $\sigma_1,\dots,\sigma_m$ are independent Rademacher variables taking values in $\{-1,+1\}$. This quantity captures the expressive capacity of $\mathcal{F}$ relative to the observed sample.
\end{definition}

% 排序误差 $\mathcal{E}^{\mathrm{rank}}_\varepsilon(\theta)$ 与上述复杂度概念将构成我们泛化分析的基础。在后续部分中，我们将训练过程中使用的经验替代损失与面向优化的误差联系起来，并描述分布偏移和OOD现象如何影响这种关系。
The ranking error $\mathcal{E}^{\mathrm{rank}}_\varepsilon(\theta)$ together with the above complexity notions will form the basis of our generalization analysis. In the following parts, we connect the empirical surrogate losses used during training to the optimization-oriented error, and characterize how distribution shift and OOD phenomena affect this relationship.

% 我们现在建立第一个主要理论结果：一个泛化界，将代理训练期间使用的经验性成对排序损失与前一节中引入的面向优化的排序误差$\mathcal{E}^{\mathrm{rank}}_\varepsilon(\theta)$联系起来。为准备分析，我们首先形式化适用一致收敛性论证的假设条件，随后给出连接经验风险与总体风险的技术性引理。这些要素将自然导向主要定理的证明。
We now establish the first main theoretical result: a generalization bound connecting the empirical pairwise ranking loss used during surrogate training to the optimization-oriented ranking error $\mathcal{E}^{\mathrm{rank}}_\varepsilon(\theta)$ introduced in the previous section. To prepare the analysis, we begin by formalizing the assumptions under which uniform convergence arguments apply, followed by technical lemmas relating the empirical and population risks. These ingredients will lead naturally into the main theorem.

% 回想一下，用于排序损失的训练对 $(\vx,\vx')$ 不需要遵循目标对分布$Q_\varepsilon$。设 $Q_{\mathrm{tr}}$ 表示训练对的数据生成分布。我们考虑所有有序对的集合。
Recall that training pairs $(\vx,\vx')$ used for the ranking loss need not follow the target pair distribution $Q_\varepsilon$. Let $Q_{\mathrm{tr}}$ denote the data-generating distribution for training pairs. We consider the full set of ranked pairs
$\mathcal{P}_S=\{(i,j)\in[m]^2:\ y_i>y_j\}$
induced by the dataset $S$.
The corresponding empirical ranking loss is
\begin{equation}
	\widehat{\mathcal{E}}_{\mathrm{tr}}^{\mathrm{rank}}(\theta)
	:=
	\frac{1}{|\mathcal{P}_S|}
	\sum_{(i,j)\in\mathcal{P}_S}
	\ell_{\mathrm{rank}}
	\left(
	h_\theta;
	\vx_i,
	\vx_j
	\right),
	(\vx_i,\vx_j)
	\sim
	Q_{\mathrm{tr}},
	\label{eq:empirical_rank_tr}
\end{equation}
where $|\mathcal{P}_S|$ denotes the number of admissible training pairs.
The theoretical objective of interest remains the population risk under the target pair distribution
\begin{equation*}
	\mathcal{E}^{\mathrm{rank}}_\varepsilon(\theta)
	:=
	\mathbb{E}_{(\vx,\vx')\sim Q_\varepsilon}
	\big[
	\ell_{\mathrm{rank}}
	\!\left(
	h_\theta;
	\vx,
	\vx'
	\right)
	\big].
	\label{eq:population_rank}
\end{equation*}
Bridging the gap between these two distributions requires assumptions that ensure the regularity of hypotheses and control of distributional mismatch.

\begin{assumption}
\label{assump:bounded}
For every $\theta \in \Theta$, the pairwise loss satisfies
\begin{equation*}
    0 \le \ell_{\mathrm{rank}}(h_\theta;\vx,\vx') \le 1,
    \ \forall (\vx,\vx')\in\mathcal{X}\times\mathcal{X}.
\end{equation*}
\end{assumption}
This simply ensures that the loss class is uniformly bounded,
a standard requirement for concentration inequalities and
Rademacher complexity arguments.

Then, we define the discrepancy between the target and training pair distributions as
\begin{equation*}
    \Delta_\varepsilon(\Theta)
    :=
    \sup_{\theta\in\Theta}
    \left|
        \mathbb{E}_{Q_\varepsilon}
        \ell_{\mathrm{rank}}(h_\theta;\vx,\vx')
        -
        \mathbb{E}_{Q_{\mathrm{tr}}}
        \ell_{\mathrm{rank}}(h_\theta;\vx,\vx')
    \right|.
\end{equation*}

% 术语$\Delta_\varepsilon(\Theta)$明确捕捉了训练对分布与面向优化的目标分布之间的不匹配。后续章节将使用几何概念（例如Wasserstein距离）来细化和上界这个量，但在第一个定理中我们抽象地处理它。
The term $\Delta_\varepsilon(\Theta)$ explicitly captures the mismatch between the training pair distribution and the optimization-oriented target distribution. Later sections will refine and upper-bound this quantity using geometric notions, e.g., Wasserstein distance, but for the first theorem we treat it abstractly.

Firstly, we state a uniform convergence lemma that relates the risk under $Q_{\mathrm{tr}}$ to its corresponding empirical estimator.

\begin{lemma}
\label{lemma:uniform_convergence}
Let $\{(\vx_i,\vx_j')| (i,j)\in [m]^2\}$ be training pairs induced by the dataset $S$ draw from the training distribution $Q_{\mathrm{tr}}$ and
$
    \mathcal{F}
    :=
    \bigl\{
        (\vx,\vx') \mapsto \ell_{\mathrm{rank}}(h_\theta;\vx,\vx')
        : \theta \in \Theta
    \bigr\}
$
denote the rank-based loss class. Under Assumptions~\ref{assump:bounded}, with probability at least $1-\delta$, we have
\begin{equation*}
    \sup_{\theta\in\Theta}
    \left|
        \mathbb{E}_{Q_{\mathrm{tr}}}
        \ell_{\mathrm{rank}}(h_\theta)
        -
        \widehat{\mathcal{E}}^{\mathrm{rank}}_{\mathrm{tr}}(\theta)
    \right|
    \le
    2\,\hat{\mathfrak{R}}_Z(\mathcal{F})
    +
    3\sqrt{\frac{\log(1/\delta)}{2m}}.
\end{equation*}
\end{lemma}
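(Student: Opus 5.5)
This is the standard Rademacher uniform-convergence bound with an empirical (data-dependent) complexity; the $3\sqrt{\log(1/\delta)/(2m)}$ term is the signature of that version. The plan treats the training pairs as a sample $Z=\{z_1,\dots,z_m\}$ of $m$ i.i.d.\ draws $z_k=(\vx_k,\vx_k')\sim Q_{\mathrm{tr}}$. The empirical estimator $\widehat{\mathcal{E}}^{\mathrm{rank}}_{\mathrm{tr}}(\theta)$ is then the sample mean of $g_\theta:=\ell_{\mathrm{rank}}(h_\theta;\cdot,\cdot)\in\mathcal{F}$ over $Z$. I would state this independence modeling explicitly, since it is what the lemma's phrasing ``drawn from $Q_{\mathrm{tr}}$'' asserts. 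By Assumption~\ref{assump:bounded}, every $g\in\mathcal{F}$ takes values in $[0,1]$. We then control the one-sided deviations
\[
\Phi^+(Z):=\sup_{g\in\mathcal{F}}\bigl(\mathbb{E}_{Q_{\mathrm{tr}}}g-\hat{\mathbb{E}}_Z g\bigr),\qquad \Phi^-(Z):=\sup_{g\in\mathcal{F}}\bigl(\hat{\mathbb{E}}_Z g-\mathbb{E}_{Q_{\mathrm{tr}}}g\bigr).
\]

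\textbf{Concentration of the deviation.} Changing one $z_k$ alters $\hat{\mathbb{E}}_Z g$ by at most $1/m$ uniformly in $g$. Hence $\Phi^\pm$ satisfy the bounded-differences condition with constants $1/m$, and McDiarmid's inequality gives
\[
\Phi^\pm\le\mathbb{E}\Phi^\pm+\sqrt{\log(2/\delta)/(2m)}
\]
with probability at least $1-\delta/2$ each.

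\textbf{Symmetrization.} Introduce a ghost sample $Z'$ and Rademacher signs $\sigma$. By Jensen's inequality, $\mathbb{E}\Phi^\pm\le 2\,\mathfrak{R}_m(\mathcal{F})$, where $\mathfrak{R}_m(\mathcal{F})=\mathbb{E}_Z\hat{\mathfrak{R}}_Z(\mathcal{F})$ is the expected Rademacher complexity.

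\textbf{Passing to the empirical complexity.} Changing one point changes $\hat{\mathfrak{R}}_Z(\mathcal{F})$ by at most $1/m$. A second application of McDiarmid therefore gives
\[
\mathfrak{R}_m(\mathcal{F})\le\hat{\mathfrak{R}}_Z(\mathcal{F})+\sqrt{\log(2/\delta)/(2m)}
\]
with probability at least $1-\delta/2$. Chaining the three steps yields, for each sign,
\[
\Phi^\pm\le 2\hat{\mathfrak{R}}_Z(\mathcal{F})+3\sqrt{\log(\cdot/\delta)/(2m)}.
\]

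\textbf{Main obstacle: confidence bookkeeping.} The two-sided statement needs both $\Phi^+$ and $\Phi^-$ plus the complexity concentration, which naively costs a union bound over three events and gives $\log(3/\delta)$ or $\log(4/\delta)$ rather than $\log(1/\delta)$. I would first try a joint McDiarmid argument on the single function
\[
\Psi(Z):=\max(\Phi^+,\Phi^-)-2\hat{\mathfrak{R}}_Z(\mathcal{F}).
\]
Its bounded differences are $3/m$, and its expectation is at most $0$ because each one-sided expectation is at most $2\mathfrak{R}_m(\mathcal{F})$. That would give $3\sqrt{\log(1/\delta)/(2m)}$ directly, except that the expectation of the maximum is not immediately bounded by the maximum of the expectations. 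If this route fails, I would state the result with the constant absorbed, e.g.\ $\log(2/\delta)$ in place of $\log(1/\delta)$, noting that this only changes constants. This matches the standard reading of such bounds in the literature (Mohri et al., Thm.~3.3).

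**A caveat on the pairwise structure.** The pairs in $\mathcal{P}_S$ are built from $m$ points and so are not independent; strictly speaking they form a U-statistic. A fully rigorous version would either use a decoupling argument (Clémençon et al.) or restrict to $\lfloor m/2\rfloor$ disjoint pairs, which preserves the bound's form with $m$ replaced by $\lfloor m/2\rfloor$. Since the lemma models the pairs as i.i.d.\ draws from $Q_{\mathrm{tr}}$, I would follow that modeling and keep $m$.
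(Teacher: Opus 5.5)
The gap is the one you flag at the end and then set aside. Your argument is the textbook proof for the mean of $m$ i.i.d.\ pairs, but $\widehat{\mathcal{E}}^{\mathrm{rank}}_{\mathrm{tr}}(\theta)$ is not such a mean. It averages over all of $\mathcal{P}_S$, i.e.\ up to $m(m-1)/2$ ranked pairs built from the same $m$ examples. So your sentence identifying it with ``the sample mean of $g_\theta$ over $Z$'' is where the proof stops being about the lemma, and McDiarmid over $m$ independent pairs is not available.

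This dependence is what the paper's proof is organized around. Restricting to $\lfloor m/2\rfloor$ disjoint pairs, as your caveat suggests, would bound a different estimator. What is missing is to work over the $m$ underlying examples $(\vx_i,y_i)$:
\begin{itemize}
\item \textbf{Concentration.} View $\Phi^{\pm}$, now defined with $\widehat{\mathcal{E}}^{\mathrm{rank}}_{\mathrm{tr}}$, as functions of these $m$ examples. With a.s.\ distinct labels, $|\mathcal{P}_S|=m(m-1)/2$, and replacing one example changes only the $m-1$ ranked pairs containing it. The bounded differences are therefore $2/m$, as in the paper's Step 3.
\item \textbf{Expectation.} Use Hoeffding's representation. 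Fix a permutation $\pi$ and order each disjoint pair $(\vx_{\pi(2k-1)},\vx_{\pi(2k)})$ by its labels; this gives $\lfloor m/2\rfloor$ i.i.d.\ draws from $Q_{\mathrm{tr}}$, read as the law of $(\vx,\vx')$ given $y>y'$. Then $\widehat{\mathcal{E}}^{\mathrm{rank}}_{\mathrm{tr}}(\theta)$ equals the average over all $\pi$ of the resulting disjoint-pair means $\widetilde{\mathcal{E}}_\pi(\theta)$. Convexity of the supremum gives $\mathbb{E}\,\Phi^{+}\le\mathbb{E}\sup_\theta\bigl(\mathbb{E}_{Q_{\mathrm{tr}}}\ell_{\mathrm{rank}}(h_\theta)-\widetilde{\mathcal{E}}_\pi(\theta)\bigr)$ for any fixed $\pi$, and likewise for $\Phi^-$. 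Your symmetrization then applies with $\lfloor m/2\rfloor$ in place of $m$.
\end{itemize}
The paper's Step 1 gestures at this reduction. It does so without ordering the pairs, and it asserts rather than derives the transfer back to the full-pair average. Still, this is the structure the lemma needs, and the outcome has the stated shape with different constants.

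On the bookkeeping, your diagnosis is right and can be sharpened. The $\Psi$ route fails already for a one-element class with a non-constant loss: there $\hat{\mathfrak{R}}_Z(\mathcal{F})=0$ but $\mathbb{E}\Psi=\mathbb{E}|\mathbb{E}g-\hat{\mathbb{E}}_Zg|>0$. Moreover, the two-sided bound with $\log(1/\delta)$ is false for $\delta$ near $1$. Take a single loss equal to $0$ or $1$ with probability $1/2$ each and $m$ odd. Then $|\mathbb{E}g-\hat{\mathbb{E}}_Zg|\ge 1/(2m)$ surely, which exceeds $3\sqrt{\log(1/\delta)/(2m)}$ as soon as $\log(1/\delta)<1/(18m)$.

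So the honest two-sided conclusion of your chain is $\log(3/\delta)$, from three events: $\Phi^+$, $\Phi^-$, and the complexity. Your fallback of $\log(2/\delta)$ is only the one-sided constant. Here you are more careful than the paper. Its Step 4 ``absorbs'' $\mathbb{E}_Z$ into $\hat{\mathfrak{R}}_Z(\mathcal{F})$, which is precisely the step you handle correctly with a second McDiarmid. Its differences of $2/m$ also yield $2\sqrt{\log(1/\delta)/(2m)}$, not $\sqrt{\log(1/\delta)/(2m)}$. Neither argument establishes the exact constants. Keep your concentration and complexity steps, but run them over the $m$ underlying examples via the permutation reduction above, and state the result with adjusted constants.
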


% 这是经验过程理论中的一个标准结果，直接适用于成对损失类 $\mathcal{F}$ 和训练分布 $Q_{\mathrm{tr}}$。现在，我们将引理~\ref{lemma:uniform_convergence} 与假设~\ref{assump:shift} 中的分布偏移分解相结合，以获得主要的泛化界。
This is a standard result from empirical process theory, directly adapted to the pairwise loss class $\mathcal{F}$ and the training distribution $Q_{\mathrm{tr}}$.
We are now ready to combine Lemma~\ref{lemma:uniform_convergence} with the distribution-shift decomposition from Assumption~\ref{assump:bounded} to obtain the main generalization bound.

\begin{theorem}
\label{thm:generalization}
Under Assumptions~\ref{assump:bounded}, with probability at least $1-\delta$ over the draw of training pairs from $Q_{\mathrm{tr}}$, for any $\theta\in\Theta$ satisfies
\begin{equation*}
    \mathcal{E}^{\mathrm{rank}}_\varepsilon(\theta)
    \le
    \widehat{\mathcal{E}}^{\mathrm{rank}}_{\mathrm{tr}}(\theta)
    +
    2\,\hat{\mathfrak{R}}_Z(\mathcal{F})
    +
    3\sqrt{\frac{\log(1/\delta)}{2m}}
    +
    \Delta_\varepsilon(\Theta).
\end{equation*}
\end{theorem}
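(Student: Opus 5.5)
The plan is a three-term telescoping decomposition. Fix any $\theta\in\Theta$ and split the target risk as
\begin{equation*}
    \mathcal{E}^{\mathrm{rank}}_\varepsilon(\theta)
    =
    \widehat{\mathcal{E}}^{\mathrm{rank}}_{\mathrm{tr}}(\theta)
    +
    \Bigl(\mathbb{E}_{Q_{\mathrm{tr}}}\ell_{\mathrm{rank}}(h_\theta)-\widehat{\mathcal{E}}^{\mathrm{rank}}_{\mathrm{tr}}(\theta)\Bigr)
    +
    \Bigl(\mathbb{E}_{Q_\varepsilon}\ell_{\mathrm{rank}}(h_\theta)-\mathbb{E}_{Q_{\mathrm{tr}}}\ell_{\mathrm{rank}}(h_\theta)\Bigr).
\end{equation*}
The first term is the empirical ranking loss appearing in the statement. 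Bounding each bracketed difference by its absolute value, and then by the supremum over $\Theta$, gives the result. The second bracket is at most the uniform deviation controlled by Lemma~\ref{lemma:uniform_convergence}. The third bracket is at most $\Delta_\varepsilon(\Theta)$ by definition.

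In order, the steps are as follows.

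\begin{enumerate}
\item Invoke Lemma~\ref{lemma:uniform_convergence}. Under Assumption~\ref{assump:bounded}, it gives an event of probability at least $1-\delta$ on which
\begin{equation*}
    \sup_{\theta}\bigl|\mathbb{E}_{Q_{\mathrm{tr}}}\ell_{\mathrm{rank}}(h_\theta)-\widehat{\mathcal{E}}^{\mathrm{rank}}_{\mathrm{tr}}(\theta)\bigr|
    \le 2\hat{\mathfrak{R}}_Z(\mathcal{F})+3\sqrt{\log(1/\delta)/(2m)}.
\end{equation*}
\item Note that $\Delta_\varepsilon(\Theta)$ is a deterministic quantity, since it depends only on the two population distributions and not on the sample. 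Its bound therefore holds on the same event at no extra cost in probability.
\item Because both bounds are suprema over $\Theta$, the final inequality holds simultaneously for all $\theta$, including any data-dependent minimizer $\theta^*$.
\end{enumerate}

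The main obstacle is justifying Lemma~\ref{lemma:uniform_convergence} itself, not the combination step. The training pairs $(\vx_i,\vx_j)$ built from a single sample $S$ are not independent: they share points, and they are selected through the condition $y_i>y_j$. A plain McDiarmid-plus-symmetrization argument over $m$ i.i.d.\ draws therefore does not apply directly.

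Since the lemma is stated earlier in the paper, I would take it as given. If a justification were required, I would try two routes in turn.
\begin{itemize}
\item First, Hoeffding's decomposition of U-statistics into averages over $\lfloor m/2\rfloor$ disjoint pairs. This restores independence and changes only constants.
\item Alternatively, treat the empirical average as a function of the $m$ points. Changing one point alters at most $O(1/m)$ of the loss mass, because the loss is bounded in $[0,1]$. McDiarmid then yields the $\sqrt{\log(1/\delta)/m}$ term. The extra $\sqrt{\log(1/\delta)/(2m)}$ comes from concentrating the empirical Rademacher complexity.
\end{itemize}
Everything else in the theorem is the triangle inequality.
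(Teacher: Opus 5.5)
Your proposal is correct and matches the paper's proof. The paper also adds and subtracts $\mathbb{E}_{Q_{\mathrm{tr}}}\ell_{\mathrm{rank}}(h_\theta)$, bounds the shift term by $\Delta_\varepsilon(\Theta)$ through its definition as a supremum over $\Theta$, and applies the two-sided uniform deviation bound of Lemma~\ref{lemma:uniform_convergence} on a single event of probability at least $1-\delta$. Your fallback sketch for the lemma (reduce to $\lfloor m/2\rfloor$ disjoint pairs, then use a bounded-differences argument over the $m$ underlying points) is also essentially how the paper argues Lemma~\ref{lemma:uniform_convergence} in the appendix.
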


%定理~\ref{thm:generalization}表明，以优化为导向的排序误差由四个部分控制：
%(i) 训练期间观察到的经验成对排序损失，
%(ii) 代理假设类的复杂度，
%(iii) 由于训练对数量有限导致的统计偏差，
%以及 (iv) 分布偏移项 $\Delta_\varepsilon(\Theta)$。
%前三个项反映了监督学习中的经典结果，而$\Delta_\varepsilon(\Theta)$ 是离线 MBO 所特有的：它量化了训练对分布与理论上识别接近最优设计所需分布之间的差异。后续部分将细化这一项，并将其明确地与 OOD 现象联系起来。
\begin{remark}
Theorem~\ref{thm:generalization} shows that the optimization-oriented ranking error is controlled by four components:
(i) the empirical pairwise ranking loss observed during training,
(ii) the complexity of the surrogate hypothesis class,
(iii) the statistical deviation due to a finite number of training pairs,
and (iv) the distribution-shift term $\Delta_\varepsilon(\Theta)$.
The first three terms mirror classical results in supervised learning, whereas $\Delta_\varepsilon(\Theta)$ is unique to offline MBO: It quantifies how much the training pair distribution differs from what is theoretically needed for identifying near-optimal designs. Subsequent sections will refine this term and relate it explicitly to OOD phenomena.
\end{remark}

\subsection{Ranking is Better than Regression}
% 上述结果确立了当使用成对排序损失训练替代模型时，面向优化的排序误差$\mathcal{E}^{\mathrm{rank}}_\varepsilon(\theta)$的泛化界。一个自然的问题是，这与使用标准逐点回归损失（如均方误差）训练替代模型相比如何。在本小节中，我们提供了正式的比较，并确定了在哪些条件下成对排序损失能比均方误差对$\mathcal{E}^{\mathrm{rank}}_\varepsilon(\theta)$产生严格更紧的界。
The above results establish a generalization bound for the optimization-oriented ranking error $\mathcal{E}^{\mathrm{rank}}_\varepsilon(\theta)$ when the surrogate is trained with a pairwise ranking loss. A natural question is how this compares to training the surrogate with a standard pointwise regression loss such as MSE. In this subsection, we provide a formal comparison.

% 为明确起见，我们将总体均方误差限定在接近最优和次优区域的定义如下：
For clarity, we define the population MSE restricted to the near-optimal and suboptimal regions:
\begin{align*}
	R^{\mathrm{mse}}_\varepsilon(\theta)
	&:=
	\mathbb{E}_{\vx\sim\rho_\varepsilon}
	\bigl[(h_\theta(\vx) - f(\vx))^2\bigr], \\
	R^{\mathrm{mse}}_{>\varepsilon}(\theta)
	&:=
	\mathbb{E}_{\vx\sim\rho_{>\varepsilon}}
	\bigl[(h_\theta(\vx) - f(\vx))^2\bigr].
\end{align*}

\begin{lemma}
	\label{lemma:mse-to-rank}
	Let $\gamma$ be a constant such that for almost every pair $(\vx,\vx')\sim Q_\varepsilon$ satisfies $f(\vx) - f(\vx') \ge \gamma$. For any $\theta\in\Theta$, we have
	\begin{equation*}
		\mathcal{E}^{\mathrm{rank}}_\varepsilon(\theta)
		\;\le\;
		\frac{4}{\gamma^2}
		\Big(
		R^{\mathrm{mse}}_\varepsilon(\theta)
		+
		R^{\mathrm{mse}}_{>\varepsilon}(\theta)
		\Big).
	\end{equation*}
\end{lemma}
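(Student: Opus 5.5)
The plan is to bound the misranking indicator pointwise and then take expectations under $Q_\varepsilon$.

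Fix $\theta$ and a pair $(\vx,\vx')$ in the support of $Q_\varepsilon$ with $f(\vx)-f(\vx')\ge\gamma$. Write $a:=h_\theta(\vx)-f(\vx)$ and $b:=h_\theta(\vx')-f(\vx')$. A misranking, $h_\theta(\vx)\le h_\theta(\vx')$, means
\begin{equation*}
f(\vx)+a\le f(\vx')+b, \qquad\text{i.e.}\qquad b-a\ge f(\vx)-f(\vx')\ge\gamma .
\end{equation*}
Hence $|a|+|b|\ge\gamma$, so at least one of $|a|,|b|$ is at least $\gamma/2$.

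From this I get the pointwise bound
\begin{equation*}
\mathbf{1}\{h_\theta(\vx)\le h_\theta(\vx')\}\le \mathbf{1}\{|a|\ge\gamma/2\}+\mathbf{1}\{|b|\ge\gamma/2\}\le \frac{4a^2}{\gamma^2}+\frac{4b^2}{\gamma^2}.
\end{equation*}
This is just Markov/Chebyshev applied pointwise. Alternatively, $(b-a)^2\ge\gamma^2$ gives $\mathbf{1}\le (b-a)^2/\gamma^2\le 2(a^2+b^2)/\gamma^2$, which is even tighter. Since the stated constant $4/\gamma^2$ is weaker, either route suffices.

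Next I take expectations over $(\vx,\vx')\sim Q_\varepsilon=\rho_\varepsilon\otimes\rho_{>\varepsilon}$. Because $Q_\varepsilon$ is a product measure, $\mathbb{E}[a^2]$ depends only on the $\vx\sim\rho_\varepsilon$ marginal and equals $R^{\mathrm{mse}}_\varepsilon(\theta)$. Likewise $\mathbb{E}[b^2]=R^{\mathrm{mse}}_{>\varepsilon}(\theta)$. The null set where the margin condition fails does not affect the expectation. This gives the claim.

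The only delicate point is the strictness convention. The loss uses $\le$, so a tie $h_\theta(\vx)=h_\theta(\vx')$ also requires $b-a\ge\gamma$, and the argument covers it. I also need $\gamma>0$ for the bound to be meaningful, which is implicit in the hypothesis. Beyond that, I expect no real obstacle: the argument is a one-line deterministic inequality followed by linearity of expectation and the product structure of $Q_\varepsilon$.
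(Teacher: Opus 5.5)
Your proposal is correct and follows the paper's proof essentially step for step: misranking forces $|e(\vx)|+|e(\vx')|\ge\gamma$, so one of the two errors is at least $\gamma/2$, the Markov-type bound $\mathbf{1}\{|u|\ge a\}\le u^2/a^2$ then applies to each, and taking expectations under $Q_\varepsilon$ yields $R^{\mathrm{mse}}_\varepsilon(\theta)+R^{\mathrm{mse}}_{>\varepsilon}(\theta)$. Your alternative route via $(b-a)^2\ge\gamma^2$ is also valid and sharpens the constant to $2/\gamma^2$; the last step needs only the marginals of $Q_\varepsilon$, not its product structure.
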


We now combine the above reduction with the generalization guarantees for the MSE-based training.

\begin{theorem}
	\label{thm:rank-vs-mse}
	Let $S$ be the training set of size $m$, and $\mathcal{F}:=\{(\vx,y)\mapsto\ell_{\mathrm{mse}}(h_\theta(\vx),y)\}$ denote the MSE loss class, with probability at least $1-\delta$, we have
	\begin{align*}
		\mathcal{E}^{\mathrm{rank}}_\varepsilon(\theta)
		\le & 
		\frac{8}{\gamma^2}
		\Big(
		\widehat{R}_{S}^{\mathrm{mse}}(\theta)
		+
		2\hat{\mathfrak{R}}_Z(\mathcal{F})
		+
		3\sqrt{\frac{\log(1/\delta)}{2m}}
		\Big).
	\end{align*}
\end{theorem}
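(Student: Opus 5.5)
The plan is to chain Lemma~\ref{lemma:mse-to-rank} with a standard uniform-convergence bound for the MSE loss class, exactly as Lemma~\ref{lemma:uniform_convergence} did for the ranking loss.

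\textbf{Step 1 (reduction).} Apply Lemma~\ref{lemma:mse-to-rank} to get
\[
\mathcal{E}^{\mathrm{rank}}_\varepsilon(\theta)\le \frac{4}{\gamma^2}\bigl(R^{\mathrm{mse}}_\varepsilon(\theta)+R^{\mathrm{mse}}_{>\varepsilon}(\theta)\bigr).
\]
Each restricted risk must then be bounded by a single population MSE $R^{\mathrm{mse}}(\theta):=\mathbb{E}[(h_\theta(\vx)-y)^2]$ under the data distribution. The factor $8/\gamma^2=2\cdot 4/\gamma^2$ suggests bounding each of the two restricted risks separately by the same quantity, giving $R^{\mathrm{mse}}_\varepsilon+R^{\mathrm{mse}}_{>\varepsilon}\le 2R^{\mathrm{mse}}$.

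\textbf{Step 2 (main obstacle: transferring the restricted risks).} I would state explicitly the assumption under which
\[
\max\{R^{\mathrm{mse}}_\varepsilon,\,R^{\mathrm{mse}}_{>\varepsilon}\}\le R^{\mathrm{mse}}
\]
holds. One option is that the data distribution's conditionals on $\mathcal{X}_\varepsilon$ and $\mathcal{X}_{>\varepsilon}$ coincide with $\rho_\varepsilon,\rho_{>\varepsilon}$ and the region masses are absorbed. Another is simply that the test-region MSEs are dominated by the training MSE. Any residual mismatch is the regression analogue of $\Delta_\varepsilon(\Theta)$, and the statement suppresses it.

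If noise is present, I would also use that zero-mean noise only adds a nonnegative variance term. Hence $\mathbb{E}(h-f)^2\le\mathbb{E}(h-y)^2$, so the risk against $f$ is bounded by the risk against the noisy labels $y$.

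\textbf{Step 3 (uniform convergence).} Assuming the MSE loss class $\mathcal{F}$ is bounded in $[0,1]$ (after rescaling, analogous to Assumption~\ref{assump:bounded}), the same McDiarmid-plus-symmetrization argument behind Lemma~\ref{lemma:uniform_convergence} gives, with probability at least $1-\delta$, for all $\theta$,
\[
R^{\mathrm{mse}}(\theta)\le\widehat R^{\mathrm{mse}}_S(\theta)+2\hat{\mathfrak{R}}_Z(\mathcal{F})+3\sqrt{\log(1/\delta)/(2m)}.
\]
The constant $3$ arises from converting expected Rademacher complexity into empirical complexity with a second McDiarmid application.

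\textbf{Step 4 (combine).} Substituting Step 3 into Steps 1–2 yields the claimed bound with constant $8/\gamma^2$.

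The remark to draw afterwards is that the $1/\gamma^2$ blow-up and the global (rather than pair-targeted) MSE make this bound looser than Theorem~\ref{thm:generalization}.
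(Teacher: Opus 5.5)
Your chain matches the paper's: Lemma~\ref{lemma:mse-to-rank}, then a factor-$2$ passage from $R^{\mathrm{mse}}_\varepsilon+R^{\mathrm{mse}}_{>\varepsilon}$ to one population MSE, then the standard symmetrization/McDiarmid bound for a $[0,1]$-bounded squared-loss class.

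\textbf{Step~2 needs repair.} The paper never bounds the two restricted risks separately. It \emph{defines} $R^{\mathrm{mse}}$ under the equal-weight mixture $\rho=\tfrac12\rho_\varepsilon+\tfrac12\rho_{>\varepsilon}$, so that $R^{\mathrm{mse}}_\varepsilon+R^{\mathrm{mse}}_{>\varepsilon}=2R^{\mathrm{mse}}$ is an identity. It then applies uniform convergence as though $S$ were drawn i.i.d.\ from $\rho$. That assumption is unstated, and it is exactly where the regression analogue of $\Delta_\varepsilon(\Theta)$ you mention gets suppressed.

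Your target $\max\{R^{\mathrm{mse}}_\varepsilon,R^{\mathrm{mse}}_{>\varepsilon}\}\le R^{\mathrm{mse}}$ is stronger than needed, and it does not follow from your first option. Suppose the data law is $p\,\rho_\varepsilon+(1-p)\rho_{>\varepsilon}$. Then $R^{\mathrm{mse}}\ge pR^{\mathrm{mse}}_\varepsilon+(1-p)R^{\mathrm{mse}}_{>\varepsilon}$, and this convex combination is strictly below the max whenever the restricted risks differ. Your ``absorbing the masses'' therefore yields only
\[
R^{\mathrm{mse}}_\varepsilon+R^{\mathrm{mse}}_{>\varepsilon}\le R^{\mathrm{mse}}/\min\{p,1-p\}.
\]
This bound is tight (take the risk concentrated on the lighter region). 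It gives the constant $4/(\gamma^2\min\{p,1-p\})$, which equals $8/\gamma^2$ only at $p=\tfrac12$.

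So the correct form of your option~1 is precisely the paper's implicit equal-mass sampling assumption, used through the sum identity rather than a per-region bound. Your option~2, postulating per-region domination, does close the argument. But it is an extra hypothesis, not a consequence of the sampling model.

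\textbf{Two smaller points.}

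Your noise step is more careful than the paper's proof. That proof builds $\mathcal{F}$ from $(h_\theta(\vx)-f(\vx))^2$, so it compares against an unobservable empirical risk. However, your inequality $\mathbb{E}(h_\theta-f)^2\le\mathbb{E}(h_\theta-y)^2$ needs $\mathbb{E}[\epsilon\mid\vx]=0$, not merely $\mathbb{E}\epsilon=0$, to remove the cross term.

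The two McDiarmid applications you describe, combined by a union bound, give $3\sqrt{\log(2/\delta)/(2m)}$ rather than $\log(1/\delta)$. The statement and the paper carry the same slip.
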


% 通过比较定理~\ref{thm:generalization}和定理~\ref{thm:rank-vs-mse}，我们可以发现在离线模型优化（MBO）中，当满足以下条件时，成对排序损失可证明优于均方误差（MSE）：
%(i) 小间隔γ导致近优设计与次优设计之间的区分度不足；
%(ii) 排序学习器在控制复杂度与适度Wasserstein偏移的情况下实现了较小的经验排序损失。
%直观而言，MSE将模型容量消耗在整个定义域的目标拟合上（包括对优化无意义的低价值区域），而排序损失直接聚焦于近优与次优设计之间的相对排序关系——这正是离线MBO所需的核心要素。
\begin{remark}
By comparing Theorem~\ref{thm:generalization} and~\ref{thm:rank-vs-mse}, we can find that the pairwise ranking loss provably outperforms MSE in offline MBO, when the following conditions are satisfied:
(i) The small margin $\gamma$ leads to poor separation between near-optimal and suboptimal designs, which is common in practice;
(ii) The ranking learner achieves a small empirical ranking loss with controlled complexity and moderate distribution shift.
Intuitively, MSE dedicates capacity to fitting the objective throughout the entire domain, including low-value regions that are irrelevant for optimization, whereas the ranking loss focuses directly on the relative ranking between different designs, which is precisely what offline MBO requires.
\end{remark}

\subsection{Distribution Shift and Algorithmic Inspiration}

% 定理~\ref{thm:generalization}中的泛化界将$\Delta_\varepsilon(\Theta)$确定为控制训练时代理损失与优化导向误差之间差距的关键项。然而，这个抽象的上确界并不能直接揭示训练数据与优化相关区域之间的分布关系。为了明确这种联系，我们对成对损失施加了温和的正则性条件，并推导了Wasserstein型上界，从而揭示出该误差界对$Q_\varepsilon$与数据分布之间分布外距离的依赖关系。
The generalization bound in Theorem~\ref{thm:generalization} identifies $\Delta_\varepsilon(\Theta)$ as the critical term governing the gap between the training-time surrogate loss and the optimization-oriented error. However, this abstract supremum does not directly reveal the distributional relationship between training data and the regions relevant for optimization. To make this connection explicit, we place mild regularity conditions on the pairwise loss and derive Wasserstein-type upper bounds that expose the dependence on distribution shift between $Q_\varepsilon$ and $Q_{\mathrm{tr}}$.

\begin{assumption}
\label{assump:lipschitz}
Let $\tilde\ell_{\mathrm{rank}}$ be a smooth surrogate of the ranking loss (e.g., hinge or logistic). Assume there exists $L>0$ such that for every $\theta\in\Theta$,
\begin{equation*}
    \big|
        \tilde\ell_{\mathrm{rank}}(h_\theta;\vx,\vx')
        -
        \tilde\ell_{\mathrm{rank}}(h_\theta;\vz,\vz')
    \big|
    \le
    L\, d_{\mathrm{pair}}\!\big((\vx,\vx'),(\vz,\vz')\big),
\end{equation*}
where the pairwise metric is
$
    d_{\mathrm{pair}}\big((\vx,\vx'),(\vz,\vz')\big)
    :=
    \|\vx - \vz\|_2 + \|\vx'-\vz'\|_2.
$
\end{assumption}

% 这个假设很自然：如果$h_\theta$满足利普希茨条件（这在有界范数神经网络中很常见），那么基于分数差异的平滑成对损失函数就会继承利普希茨连续性。该假设使得我们可以利用瓦瑟斯坦距离的坎托罗维奇-鲁宾斯坦对偶性。
This assumption is natural: If $h_\theta$ is Lipschitz, which is common for neural networks, then smooth pairwise losses based on score differences inherit Lipschitz continuity. The assumption enables the use of the Kantorovich--Rubinstein duality for Wasserstein distances~\citep{villani2008optimal}.

\begin{lemma}
	\label{lemma:wasserstein_bound}
	Under Assumption~\ref{assump:lipschitz}, for every $\theta\in\Theta$,
	\begin{equation*}
		\big|
		\mathbb{E}_{Q_\varepsilon}\tilde\ell_{\mathrm{rank}}(h_\theta)
		-
		\mathbb{E}_{Q_{\mathrm{tr}}}\tilde\ell_{\mathrm{rank}}(h_\theta)
		\big|
		\;\le\;
		L \, W_1(Q_\varepsilon, Q_{\mathrm{tr}}),
	\end{equation*}
	where $W_1$ denotes the 1-Wasserstein distance on the product space
	$(\mathcal{X}\times\mathcal{X}, d_{\mathrm{pair}})$.
\end{lemma}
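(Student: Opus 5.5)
The plan is to apply Kantorovich--Rubinstein duality on the metric space $(\mathcal{X}\times\mathcal{X}, d_{\mathrm{pair}})$. Fix $\theta\in\Theta$ and set $g_\theta(\vx,\vx') := \tilde\ell_{\mathrm{rank}}(h_\theta;\vx,\vx')$. By Assumption~\ref{assump:lipschitz}, $g_\theta$ is $L$-Lipschitz with respect to $d_{\mathrm{pair}}$, so $g_\theta/L$ belongs to the unit Lipschitz ball. The first step is to recall the dual characterization
\begin{equation*}
W_1(Q_\varepsilon,Q_{\mathrm{tr}}) = \sup_{\|g\|_{\mathrm{Lip}}\le 1} \Big( \mathbb{E}_{Q_\varepsilon} g - \mathbb{E}_{Q_{\mathrm{tr}}} g \Big).
\end{equation*}
This holds for probability measures with finite first moment on a Polish space; the product of subsets of $\mathbb{R}^d$ with the sum-of-Euclidean-norms metric qualifies. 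Plugging in $g_\theta/L$ and multiplying by $L$ gives
\begin{equation*}
\mathbb{E}_{Q_\varepsilon} g_\theta - \mathbb{E}_{Q_{\mathrm{tr}}} g_\theta \le L\,W_1(Q_\varepsilon,Q_{\mathrm{tr}}).
\end{equation*}
Since $-g_\theta/L$ is also $1$-Lipschitz, the same argument bounds the reverse difference, and the absolute value follows.

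A more elementary route avoids the duality theorem altogether, using only the primal definition. For any coupling $\pi$ of $Q_\varepsilon$ and $Q_{\mathrm{tr}}$,
\begin{equation*}
\big|\mathbb{E}_{Q_\varepsilon} g_\theta - \mathbb{E}_{Q_{\mathrm{tr}}} g_\theta\big| = \big|\mathbb{E}_{\pi}[g_\theta(\vx,\vx') - g_\theta(\vz,\vz')]\big| \le L\,\mathbb{E}_\pi d_{\mathrm{pair}}\big((\vx,\vx'),(\vz,\vz')\big).
\end{equation*}
Taking the infimum over couplings yields $L\,W_1$ directly. I would present this coupling argument and mention the duality as the conceptual reason the bound is tight.

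The main technical point is integrability. We need $\mathbb{E}_{Q_\varepsilon} g_\theta$ and $\mathbb{E}_{Q_{\mathrm{tr}}} g_\theta$ to be finite so that the difference of expectations equals the expectation of the difference under $\pi$. There are two ways to secure this. If $\mathcal{X}$ is bounded, or the surrogate loss is bounded (as in Assumption~\ref{assump:bounded}), it is immediate. Otherwise, Lipschitz continuity gives $|g_\theta(\vx,\vx')| \le |g_\theta(\vx_0,\vx_0')| + L\,d_{\mathrm{pair}}\big((\vx,\vx'),(\vx_0,\vx_0')\big)$ for a fixed base point, so finiteness reduces to the finite first moments implicit in $W_1$ being finite. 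If $W_1=\infty$, the bound is trivial. Finally, since the constant $L$ is uniform in $\theta$, the bound holds for every $\theta\in\Theta$, and taking the supremum it also controls the smooth-loss analogue of $\Delta_\varepsilon(\Theta)$.
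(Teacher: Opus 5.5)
Your proposal is correct, and the coupling argument you choose to present is essentially the paper's proof. The paper fixes an optimal (or near-optimal) coupling $\pi^\star$ of $Q_\varepsilon$ and $Q_{\mathrm{tr}}$ and bounds $\int \bigl(g(\vx,\vx')-g(\vz,\vz')\bigr)\,\mathrm{d}\pi^\star$ by $L\int d_{\mathrm{pair}}\,\mathrm{d}\pi^\star$; taking the infimum over all couplings, as you do, is slightly cleaner because no optimal coupling needs to exist. Your integrability remarks cover a point the paper skips, with one small correction: Assumption~\ref{assump:bounded} bounds the $0$--$1$ loss $\ell_{\mathrm{rank}}$, not the smooth surrogate $\tilde\ell_{\mathrm{rank}}$, and hinge or logistic losses are unbounded when $h_\theta$ is, so finiteness actually comes from your linear-growth argument together with the finite first moments built into the standard definition of $W_1$.
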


This lemma turns the abstract discrepancy $\Delta_\varepsilon(\Theta)$ into a concrete geometric quantity that captures how much mass transportation is required to morph the training distribution $Q_{\mathrm{tr}}$ into the ideal distribution $Q_\varepsilon$.

We now derive a sharper characterization that decomposes $W_1(Q_\varepsilon,Q_{\mathrm{tr}})$ over individual coordinates, revealing how the OOD structure of the near-optimal design directly contributes to the overall error. Before that, we decompose the target and training pair distributions as
\begin{equation*}
    Q_\varepsilon = \rho_\varepsilon \otimes \rho_{>\varepsilon},
    \qquad
    Q_{\mathrm{tr}} = \mu_{\mathrm{tr}} \otimes \nu_{\mathrm{tr}},
\end{equation*}
where $\rho_\varepsilon$ and $\rho_{>\varepsilon}$ are the conditional distributions over $\mathcal{X}_\varepsilon$ and $\mathcal{X}_{>\varepsilon}$ respectively, and $\mu_{\mathrm{tr}},\nu_{\mathrm{tr}}$ are the marginals of $Q_{\mathrm{tr}}$ for the first and second coordinates.

\begin{corollary}
\label{corollary:marginal}
Under the assumptions of Theorem~\ref{thm:generalization} and Assumption~\ref{assump:lipschitz}, with probability at least
$1-\delta$,
\begin{equation*}
\begin{split}
    \mathcal{E}^{\mathrm{rank}}_\varepsilon(\theta)
    &\le
    \widehat{\mathcal{E}}^{\mathrm{rank}}_{\mathrm{tr}}(\theta)
    + 2\,\hat{\mathfrak{R}}_Z(\mathcal{F})
    + 3\sqrt{\frac{\log(1/\delta)}{2m}}
    \\&+ L\big(
        W_1(\rho_\varepsilon,\mu_{\mathrm{tr}})
        +
        W_1(\rho_{>\varepsilon},\nu_{\mathrm{tr}})
    \big).
\end{split}
\end{equation*}
\end{corollary}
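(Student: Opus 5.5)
Our plan is to start from Theorem~\ref{thm:generalization}, which bounds $\mathcal{E}^{\mathrm{rank}}_\varepsilon(\theta)$ by the empirical training loss, the Rademacher term, the confidence term, and $\Delta_\varepsilon(\Theta)$, all on the same event of probability at least $1-\delta$. The corollary then only needs a deterministic bound $\Delta_\varepsilon(\Theta)\le L\big(W_1(\rho_\varepsilon,\mu_{\mathrm{tr}})+W_1(\rho_{>\varepsilon},\nu_{\mathrm{tr}})\big)$. No further probabilistic argument is required, and the bound holds for every $\theta\in\Theta$ simultaneously.

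We obtain this in two steps.

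\textbf{Step 1 (Kantorovich--Rubinstein).} For each $\theta$, Lemma~\ref{lemma:wasserstein_bound} gives
\[
\big|\mathbb{E}_{Q_\varepsilon}\tilde\ell_{\mathrm{rank}}(h_\theta)-\mathbb{E}_{Q_{\mathrm{tr}}}\tilde\ell_{\mathrm{rank}}(h_\theta)\big|\le L\,W_1(Q_\varepsilon,Q_{\mathrm{tr}}).
\]
The right-hand side does not depend on $\theta$, so taking the supremum over $\Theta$ bounds the discrepancy.

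\textbf{Step 2 (tensorization of $W_1$).} We show that for product measures on $(\mathcal{X}\times\mathcal{X},d_{\mathrm{pair}})$,
\[
W_1(\rho_\varepsilon\otimes\rho_{>\varepsilon},\ \mu_{\mathrm{tr}}\otimes\nu_{\mathrm{tr}})\le W_1(\rho_\varepsilon,\mu_{\mathrm{tr}})+W_1(\rho_{>\varepsilon},\nu_{\mathrm{tr}}).
\]
Take optimal (or $\eta$-optimal) couplings $\pi_1$ of $(\rho_\varepsilon,\mu_{\mathrm{tr}})$ and $\pi_2$ of $(\rho_{>\varepsilon},\nu_{\mathrm{tr}})$. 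Their product, after reordering coordinates, is a coupling of the two product measures. Because $d_{\mathrm{pair}}$ is the sum of the coordinate distances, the transport cost splits exactly into the sum of the two marginal costs. Letting $\eta\to0$ gives the inequality. This step uses the product decomposition $Q_{\mathrm{tr}}=\mu_{\mathrm{tr}}\otimes\nu_{\mathrm{tr}}$ assumed just before the corollary.

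\textbf{Main obstacle.} The delicate point is the mismatch of losses. $\Delta_\varepsilon(\Theta)$ in Theorem~\ref{thm:generalization} is defined with the 0--1 loss $\ell_{\mathrm{rank}}$, which is not Lipschitz, while Assumption~\ref{assump:lipschitz} concerns the smooth surrogate $\tilde\ell_{\mathrm{rank}}$. We would resolve this by reading the corollary, in the spirit of the paper, with the ranking loss in Theorem~\ref{thm:generalization} instantiated as the Lipschitz surrogate $\tilde\ell_{\mathrm{rank}}$.

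Concretely, we would rerun Theorem~\ref{thm:generalization} with $\tilde\ell_{\mathrm{rank}}$, assumed bounded in $[0,1]$ as in Assumption~\ref{assump:bounded} (for example, a clipped hinge or ramp loss). We would then use $\ell_{\mathrm{rank}}\le\tilde\ell_{\mathrm{rank}}$ on the left-hand side, which holds for margin functions $\phi$ dominating the indicator $\mathbf{1}\{t\ge 0\}$. The empirical term is then interpreted as the surrogate empirical loss, with $\mathcal{F}$ the surrogate loss class. We would state this interpretation explicitly.

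With that in place, substituting Steps 1–2 into Theorem~\ref{thm:generalization} yields the claimed bound.
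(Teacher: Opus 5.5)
Your proposal is correct and follows the paper's route. The paper likewise substitutes Lemma~\ref{lemma:wasserstein_bound} into Theorem~\ref{thm:generalization} to replace $\Delta_\varepsilon(\Theta)$ by $L\,W_1(Q_\varepsilon,Q_{\mathrm{tr}})$. It then proves your Step~2 as Lemma~\ref{lemma:marginal}, by taking the product of optimal marginal couplings and using additivity of $d_{\mathrm{pair}}$; your $\eta$-optimal couplings merely drop the tacit assumption that optimal couplings are attained.

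The one substantive difference is the loss mismatch you flag, which the paper's proof passes over without comment. It applies Lemma~\ref{lemma:wasserstein_bound}, a statement about the Lipschitz surrogate $\tilde\ell_{\mathrm{rank}}$, directly to $\Delta_\varepsilon(\Theta)$, which is defined with the 0--1 loss. Your repair is sound: rerun Theorem~\ref{thm:generalization} with a $[0,1]$-valued Lipschitz $\phi\ge\mathbf{1}\{t\ge 0\}$, and invoke $\ell_{\mathrm{rank}}\le\tilde\ell_{\mathrm{rank}}$ only on the left-hand side. Its price is that the empirical term and the class $\mathcal{F}$ become the surrogate's, and this is necessary rather than cosmetic.

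With the 0--1 loss on both sides, the inequality can fail. Take a singleton $\Theta$, so $\hat{\mathfrak{R}}_Z(\mathcal{F})=0$, whose Lipschitz $h_\theta$ ties on a point-mass target pair $(\vx,\vx')$. Let it strictly rank correctly a point-mass training pair $(\vz,\vz')$ with $\|\vx-\vz\|_2,\|\vx'-\vz'\|_2\le\eta$, so the empirical 0--1 loss is $0$. The left side is then $1$, while the right side is at most $3\sqrt{\log(1/\delta)/(2m)}+2L\eta$.
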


% 这一分解表明，分布偏移的主要来源是$W_1(\rho_\varepsilon,\mu_{\mathrm{tr}})$，即近优区域与训练数据经验覆盖之间的Wasserstein距离。当优质设计远离数据流形时（离线MBO中的典型OOD场景），边际距离会变得很大，此时无论模型对观测数据对的拟合程度多好，代理排序误差都无法降低。
\begin{remark}
Corollary~\ref{corollary:marginal} shows that the dominant generalization error of ranking-based offline MBO is driven by the mismatch between the training marginal distribution $\mu_{tr}$ and the near-optimal distribution $\rho_\varepsilon$. Crucially, the leading term $W_1(\rho_\varepsilon,\mu_{tr})$ is not an intrinsic limitation of the hypothesis class, but depends on how the training data are constructed and utilized. This observation suggests that ranking performance can be improved by explicitly reshaping the effective training marginal toward near-optimal regions, rather than modifying the ranking objective itself. We instantiate this principle in Section~\ref{sec:method} by constructing an empirical approximation of $\rho_\varepsilon$ from high-quality samples, which leads to consistent empirical gains.
\end{remark}

% 在下一小节中，我们将进一步将$W_1(\rho_\varepsilon,\mu_{\mathrm{tr}})$与$\mathcal{X}_\varepsilon$到数据支撑集的平均距离联系起来，从而得出一个明确的OOD脆弱性几何度量。
% In the next subsection, we will further connect $W_1(\rho_\varepsilon,\mu_{\mathrm{tr}})$ to the average distance of $\mathcal{X}_\varepsilon$ from the data support, yielding an explicit geometric measure of OOD vulnerability.

\subsection{Geometric Separation of Offline MBO}

% 推论~\ref{corollary:marginal}中精炼的边际边界强调了$W_1(\rho_\varepsilon,\mu_{\mathrm{tr}})$作为主导项决定了离线MBO中的优化失败。为了更具体地理解这一量值，我们现在通过考察接近最优设计与数据流形之间的距离，将其与离线数据集的几何结构联系起来。
The refined marginal bound in Corollary~\ref{corollary:marginal} highlights $W_1(\rho_\varepsilon,\mu_{\mathrm{tr}})$ as the dominant term governing optimization failure in offline MBO. To understand this quantity more concretely, we now relate it to the geometric structure of the offline dataset by examining the distance between near-optimal designs and the data manifold.

% 由于$\mu_{\mathrm{tr}}$的支撑集集中在离线数据集或其紧邻区域（取决于配对采样机制），Wasserstein距离$W_1(\rho_\varepsilon,\mu_{\mathrm{tr}})$衡量了$\varepsilon$-近优集与代理模型接收训练信号的区域之间的距离。当近优区域$\mathcal{X}_\varepsilon$远离数据流形时，代理模型必须进行激进的外推，这不可避免地会导致如\citet{kim2025offline}所指出的分布外高估现象。本节将这一直观认识形式化。
Since $\mu_{\mathrm{tr}}$ is supported on the offline dataset or its immediate vicinity, the Wasserstein distance $W_1(\rho_\varepsilon,\mu_{\mathrm{tr}})$ measures how far the $\varepsilon$-near-optimal set lies from the region where the surrogate receives training signal. When the near-optimal region $\mathcal{X}_\varepsilon$ is far from the data manifold, the surrogate must extrapolate aggressively, inevitably producing OOD overestimation as noted by \citet{kim2025offline}. 

% We assume that $\mu_{\mathrm{tr}}$ is supported on $\mathcal{M}$ and that $\mathcal{M}$ is closed under projection. Surrogate training pairs are almost always sampled from or near the dataset $S$, and thus $\mu_{\mathrm{tr}}$ lives on a low-dimensional structure reflecting the distribution of observed designs.

\begin{corollary}
\label{corollary:geometry}
Let $\mathcal{M}\subseteq\mathcal{X}$ denote the support of $\mu_{\mathrm{tr}}$. For any $\vx \in \mathcal{X}$, define the distance to the data manifold as
$\mathrm{dist}(\vx,\mathcal{M})
    :=
    \inf_{\vz\in\mathcal{M}}\|\vx - \vz\|_2.$
Under the Assumption~\ref{assump:bounded} and~\ref{assump:lipschitz},
with probability at least $1-\delta$,
\begin{equation*}
\begin{split}
    \mathcal{E}^{\mathrm{rank}}_\varepsilon(\theta)
    &\le
    \widehat{\mathcal{E}}^{\mathrm{rank}}_{\mathrm{tr}}(\theta)
    + 2\,\hat{\mathfrak{R}}_Z(\mathcal{F})
    + 3\sqrt{\frac{\log(1/\delta)}{2m}}
    \\&+ L\!\left(
        \mathbb{E}_{\vx\sim\rho_\varepsilon}\!
        \mathrm{dist}(\vx,\mathcal{M})
        +
        C_{\mathrm{calib}}
        +
        W_1(\rho_{>\varepsilon},\nu_{\mathrm{tr}})
    \right),
\end{split}
\end{equation*}
where $C_{\mathrm{calib}}$ is a calibration term depending only on the internal spread of $\mathcal{M}$ under $\mu_{\mathrm{tr}}$.
\end{corollary}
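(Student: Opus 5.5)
The plan is to start from Corollary~\ref{corollary:marginal} and bound its leading term $W_1(\rho_\varepsilon,\mu_{\mathrm{tr}})$ geometrically. The other terms, namely the empirical ranking loss, the Rademacher term, the concentration term and $W_1(\rho_{>\varepsilon},\nu_{\mathrm{tr}})$, are carried over unchanged. Everything therefore reduces to one deterministic inequality,
\begin{equation*}
W_1(\rho_\varepsilon,\mu_{\mathrm{tr}}) \le \mathbb{E}_{\vx\sim\rho_\varepsilon}\mathrm{dist}(\vx,\mathcal{M}) + C_{\mathrm{calib}}.
\end{equation*}
Substituting it into Corollary~\ref{corollary:marginal} and multiplying by $L$ gives the statement on the same probability-$(1-\delta)$ event.

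To prove the inequality, I would use the triangle inequality for $W_1$ through an intermediate measure supported on $\mathcal{M}$. Let $\pi:\mathcal{X}\to\mathcal{M}$ be a measurable (near-)nearest-point selection, so that $\|\vx-\pi(\vx)\|_2 \le \mathrm{dist}(\vx,\mathcal{M})+\eta$ for arbitrary $\eta>0$. Such a selection exists by a measurable selection theorem once $\mathcal{M}$ is closed, which holds because it is a support. Set $\tilde\rho := \pi_\#\rho_\varepsilon$. Then
\begin{equation*}
W_1(\rho_\varepsilon,\mu_{\mathrm{tr}}) \le W_1(\rho_\varepsilon,\tilde\rho) + W_1(\tilde\rho,\mu_{\mathrm{tr}}).
\end{equation*}
The coupling $(\mathrm{id},\pi)_\#\rho_\varepsilon$ gives $W_1(\rho_\varepsilon,\tilde\rho)\le \mathbb{E}_{\rho_\varepsilon}\mathrm{dist}(\vx,\mathcal{M})+\eta$, and letting $\eta\to 0$ removes the slack.

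The second term is the calibration term. Because $\tilde\rho$ and $\mu_{\mathrm{tr}}$ both live on $\mathcal{M}$, the independent coupling gives
\begin{equation*}
W_1(\tilde\rho,\mu_{\mathrm{tr}}) \le \sup_{\vz\in\mathcal{M}}\mathbb{E}_{\vz'\sim\mu_{\mathrm{tr}}}\|\vz-\vz'\|_2 \le \mathrm{diam}(\mathcal{M}).
\end{equation*}
I would define $C_{\mathrm{calib}}$ as the middle quantity, which is a spread of $\mathcal{M}$ under $\mu_{\mathrm{tr}}$ and is independent of $\rho_\varepsilon$. This matches the statement's claim that it depends only on the internal spread of $\mathcal{M}$ under $\mu_{\mathrm{tr}}$.

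The main obstacle is making $C_{\mathrm{calib}}$ genuinely independent of the target distribution while keeping it finite. This needs $\mathcal{M}$ bounded, or at least $\mu_{\mathrm{tr}}$ with a finite first moment and the supremum finite. I would state boundedness of $\mathcal{X}$ (or of $\mathcal{M}$) as an implicit standing condition. Measurability of $\pi$ is a routine technicality, handled by the $\eta$-approximation above.
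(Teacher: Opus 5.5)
Your proof is correct and follows the paper's route: push $\rho_\varepsilon$ forward under a nearest-point map onto $\mathcal{M}$, bound that transport cost by $\mathbb{E}_{\vx\sim\rho_\varepsilon}\mathrm{dist}(\vx,\mathcal{M})$, control the remaining within-$\mathcal{M}$ term by a spread constant via the triangle inequality, and substitute into Corollary~\ref{corollary:marginal}. The differences are only minor refinements: your $\eta$-approximate selection avoids needing the projection to be attained, and your independent-coupling constant $\sup_{\vz\in\mathcal{M}}\mathbb{E}_{\vz'\sim\mu_{\mathrm{tr}}}\|\vz-\vz'\|_2$ is never larger than the paper's choice $C_{\mathrm{calib}}=D_{\mathcal{M}}$, which the paper obtains from $W_1\le D_{\mathcal{M}}\,\mathrm{TV}\le D_{\mathcal{M}}$.
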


% 推论~\ref{corollary:geometry} 明确给出了离线MBO问题中优化难度的几何特征描述。其核心项$\mathbb{E}_{\vx\sim\rho_\varepsilon}\mathrm{dist}(\vx,\mathcal{M})$量化了最优设计方案与数据流形之间的距离，该距离揭示了离线MBO问题固有的分布外脆弱性：当高质量设计方案无法被离线数据集中的任何点充分逼近时，代理模型将被迫进行深度外推，从而引发文献记载的分布外分数膨胀现象\citep{kim2025offline}——此时设计空间中未观测区域会获得虚高的预测分数，误导基于梯度的搜索陷入次优区域。
\begin{remark}
Corollary~\ref{corollary:geometry} offers a geometric characterization of optimization difficulty in offline MBO. Its central quantity, $\mathbb{E}_{\vx\sim\rho_\varepsilon}\mathrm{dist}(\vx,\mathcal{M})$, measures how far near-optimal designs lie from the data manifold. This distance captures the intrinsic OOD vulnerability of offline optimization: When high-quality designs are poorly supported by the offline data, surrogate models must extrapolate, leading to over-optimistic predictions in unobserved regions and misleading gradient-based search~\citep{kim2025offline}. Consequently, even with perfect training optimization and controlled model complexity, the optimization-oriented ranking error remains large unless the near-optimal region lies sufficiently close to the data manifold. This result formalizes the empirical observation that OOD structure fundamentally limits the effectiveness of offline MBO.
\end{remark}

\section{Distribution-Aware Ranking (DAR)}\label{sec:method}

\paragraph{Motivation.}
Corollary~\ref{corollary:marginal} identifies ranking as the appropriate learning objective for offline MBO and further reveals that the dominant source of optimization error arises from distributional mismatch between the training data and near-optimal designs.  
Importantly, this mismatch depends not only on the choice of surrogate objective, but also on how the training data are used to construct the surrogate model.
Motivated by this observation, we propose a simple yet effective ranking-based optimization method that explicitly reshapes the effective training data distribution toward near-optimal regions.

Rather than modifying the ranking objective, our approach operates at the level of data construction.
By selectively emphasizing high-quality samples during training, the induced marginal distribution of the ranking model is brought closer to the near-optimal distribution, thereby directly targeting the dominant error term identified in our theory. A full description of our algorithm is presented in Appendix~\ref{sec:app-alg}.

\paragraph{Dataset construction.}
Given an offline dataset $S = \{(\vx_i, y_i)\}_{i=1}^n$, where $y_i$ denotes the observed performance of design $\vx_i$, we first partition the dataset according to target values.
Specifically, let $q_\varepsilon$ denote the $\varepsilon$-quantile of $\{y_i\}_{i=1}^n$.
We define the near-optimal subset $S_\varepsilon = \{\vx_i \in S \mid y_i \ge q_\varepsilon\}$, and the suboptimal subset $S_{>\varepsilon} = S \setminus S_\varepsilon.$
This empirical partition approximates the theoretical sets $\mathcal{X}_\varepsilon$ and $\mathcal{X}_{>\varepsilon}$ introduced in Section~3, and serves as an estimate of the near-optimal and suboptimal regions, respectively.

Compared to approaches that penalize global ranking errors over the entire dataset~\citep{tan2024offline}, this construction focuses learning capacity on the ranking between near-optimal and clearly suboptimal designs, which is most relevant for downstream optimization.

\paragraph{Ranking loss learning.}
Training pairs $(\vx_1, \vx_2)$ are sampled to approximate the target ranking behavior.
The primary component of training pairs consists of $\vx_1 \in S_\varepsilon$ and $\vx_2 \in S_{>\varepsilon}$, which mimics the target pairwise distribution $Q_\varepsilon = \rho_\varepsilon \otimes \rho_{>\varepsilon}$ and directly targets the minimization of the ranking error $\mathcal{E}^{\mathrm{rank}}_\varepsilon$.

In addition, we introduce a regularization component with ratio $\lambda \in [0,1]$, where both $\vx_1, \vx_2 \in S_\varepsilon$.
This component encourages consistent ranking within the high-quality region and stabilizes optimization along the near-optimal manifold.
The overall pair sampling distribution is thus a mixture of cross-region and intra-region pairs.

The surrogate model $h_\theta$ is trained using the margin ranking loss with margin $\beta$, which serves as a smooth surrogate $\tilde{\ell}_{\mathrm{rank}}$.
For a pair where $\vx_1$ is preferred over $\vx_2$, the loss is given by
\begin{equation}
\mathcal{L}(\vx_1, \vx_2)
=
\max\!\left(0,\,
\beta - \big(h_\theta(\vx_1) - h_\theta(\vx_2)\big)
\right).
\end{equation}

\paragraph{Optimization adaptation.}
A practical challenge of ranking-based surrogate models is that the absolute scale of model predictions is not identifiable.
Unlike regression objectives, which encourage accurate prediction of target values, ranking losses are invariant to affine transformations of the output.
As a result, the scale of predictions produced by different ranking losses can vary substantially, which directly affects the magnitude of gradients used by gradient-based optimizers during the design search phase~\citep{tan2024offline}.

To ensure stable and fair optimization across different surrogate objectives, following~\citet{tan2024offline}, we apply an output adaptation step after training the ranking model.
Specifically, we first evaluate the trained surrogate $h_\theta$ on the entire training set and compute the empirical mean $\tilde{\mu}$ and standard deviation $\tilde{\sigma}$ of its predictions.
We then normalize the model output using a z-score transformation by
$\tilde{h}_\theta(\vx)
=
\frac{h_\theta(\vx) - \tilde{\mu}}{\tilde{\sigma}}$.
All gradient-based optimization steps are performed with respect to the normalized surrogate $\tilde{h}_\theta$.

This normalization aligns the scale of gradients with those of regression-based surrogates, allowing us to reuse standard hyper-parameter settings for the step size $\eta$ and the number of optimization steps $T$ as in prior work~\citep{tan2024offline}.

\section{Experiments}

%%%%%%%%%% 大纲 %%%%%%%%%%
% 1.实验设置：benchmark任务介绍、对比方法介绍、评估准则、超参设置
% 2.实验结果：
% 3.讨论和未来工作：用1-2句话凝练本研究最核心的发现。解释发现的意义，并与前人研究对比。挖掘发现背后的机制、原因或普遍意义。客观、坦诚地说明研究的局限及其来源（如样本、方法）。基于研究发现和局限性，提出具体、可行的未来研究方向。
% 4.结论：总结全文，重申研究的最核心价值与贡献。要回答的问题：“本研究最终解决了什么问题？”、“最关键的贡献是什么？”
%%%%%%%%%%%%%%%%%%%%%%%%%

% 在本节中，我们通过实验验证理论发现。在章节~\ref{sec:exp-branin}中，我们首先在Branin函数上比较基于回归的方法、标准排序方法以及我们的DAR方法，以验证排序目标确实能带来更优的离线优化性能，并且如推论~\ref{corollary:marginal}所预测，重塑训练数据分布能进一步提升结果。我们还研究了近优设计与训练数据之间几何间隔增大的影响，揭示了离线MBO方法的内在局限性——这正是推论~\ref{corollary:geometry}所描述的特性。在Design-Bench基准~\citep{trabucco2022design}上的扩展实验进一步验证了我们提出的分布感知排序方法的有效性，具体结果见章节~\ref{sec:exp-db}。
In this section, we empirically validate the theoretical findings. In Section~\ref{sec:exp-branin}, we first compare regression-based methods, standard ranking approaches, and our DAR on the Branin function to verify that ranking objectives lead to superior offline optimization performance and that reshaping the training data distribution further improves results, as predicted by Corollary~\ref{corollary:marginal}. We also investigate the effect of increasing geometric separation between near-optimal designs and the training data, illustrating the intrinsic limitation of offline MBO characterized by Corollary~\ref{corollary:geometry}.
Extended experiments on the Design-Bench~\citep{trabucco2022design} further validate the effectiveness of our distribution-aware ranking method, as illustrated in Section~\ref{sec:exp-db}.

\begin{figure*}[!ht]
    \centering
     %第一个子图 (a)
     \captionsetup[subfigure]{skip=0pt, font=normalfont} 
     \begin{subfigure}[b]{0.32\linewidth}
         \centering
         \includegraphics[width=\linewidth]{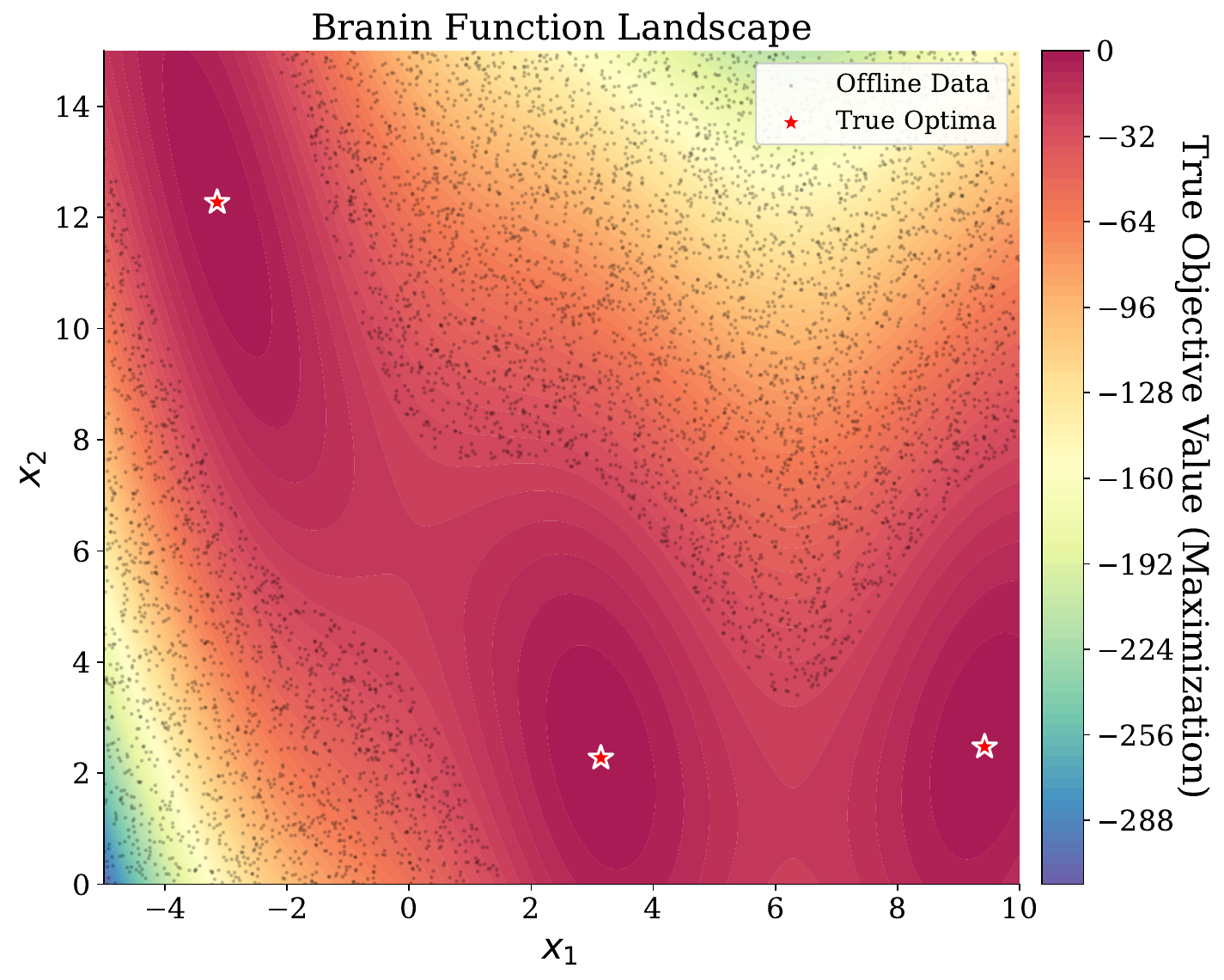}
         \caption{} % 空的大括号会自动生成 (a)
         \label{fig:branin_a}
     \end{subfigure}
     \hfill % 添加一点水平间距
     %第二个子图 (b)
    \begin{subfigure}[b]{0.32\linewidth}
        \centering
        \includegraphics[width=\linewidth]{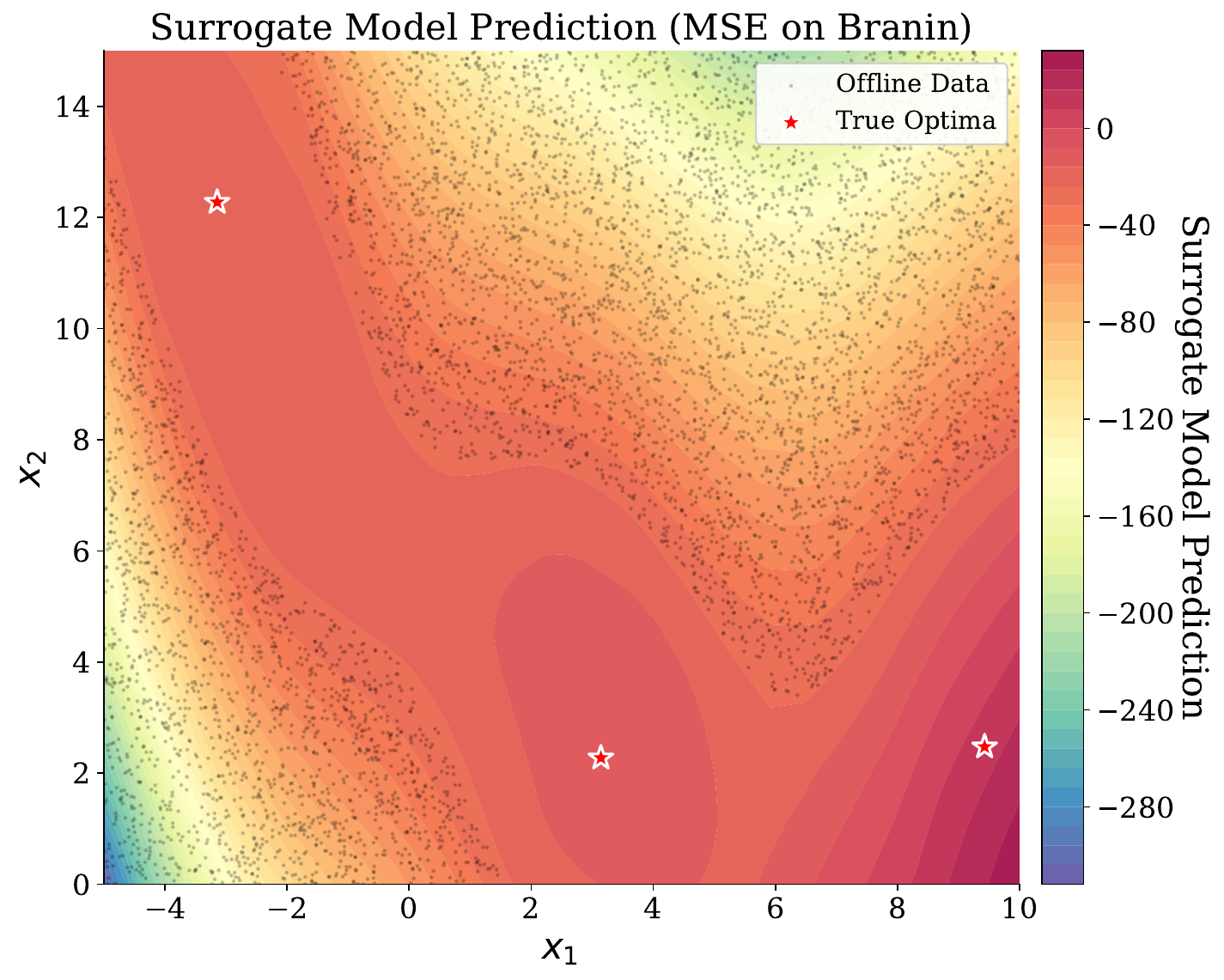}
        \caption{} % 空的大括号会自动生成 (b)
        \label{fig:branin_b}
    \end{subfigure}
    \hfill % 添加一点水平间距
    % 第三个子图 (c)
    \begin{subfigure}[b]{0.32\linewidth}
        \centering
        \includegraphics[width=\linewidth]{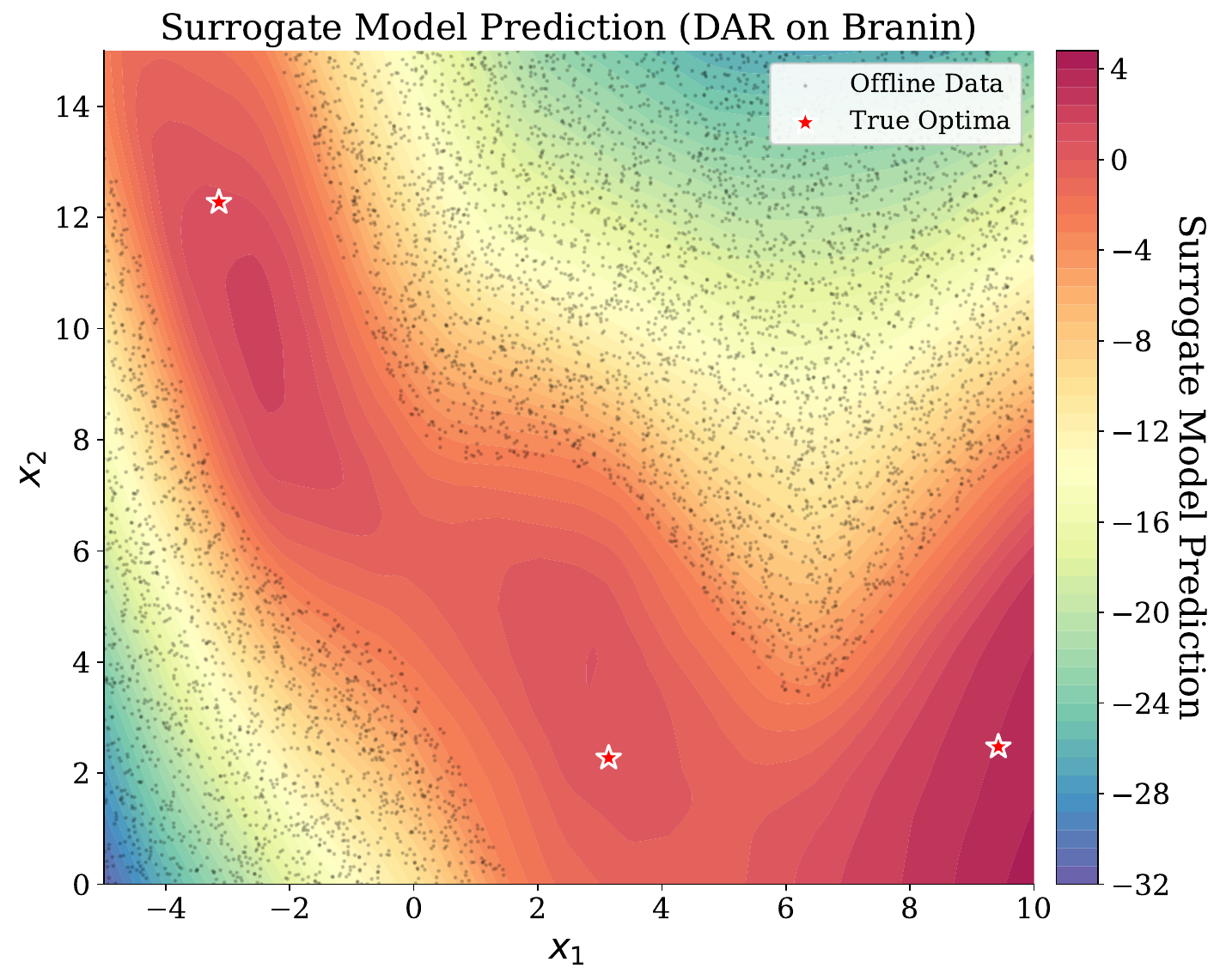}
        \caption{} % 空的大括号会自动生成 (c)
        \label{fig:branin_c}
    \end{subfigure}
    
    \vspace{-0.5em}
    \caption{
    Landscape analysis on the Branin function. 
    \textbf{(a)} The landscape of the Branin function and the distribution of the offline dataset (black dots), which consists of the worst 60\% of designs. 
    \textbf{(b)} and \textbf{(c)} present the prediction landscape derived from the MSE-trained surrogate and DAR-trained surrogate, respectively. 
    The DAR surrogate accurately extrapolates the landscape structure, recovering the three distinct peaks of the true optima with high fidelity, whereas the MSE baseline fails to precisely  recover these modes.
    % \textbf{(b)} The relationship between ranking error probability and the manifold radius $\delta$ with a fixed $\epsilon$~=~5\%. The plot compares the expected ranking error, the generalization gap, and the empirical ranking error. 
    % \textbf{(c)} Objective value performance versus manifold radius $\delta$. The shaded area represents the approximation gap between the true ground truth values and the surrogate estimated values.
    }
    \label{fig:branin}
    \vspace{-1em}
\end{figure*}

\begin{figure}
    \centering
    \includegraphics[width=0.7\linewidth]{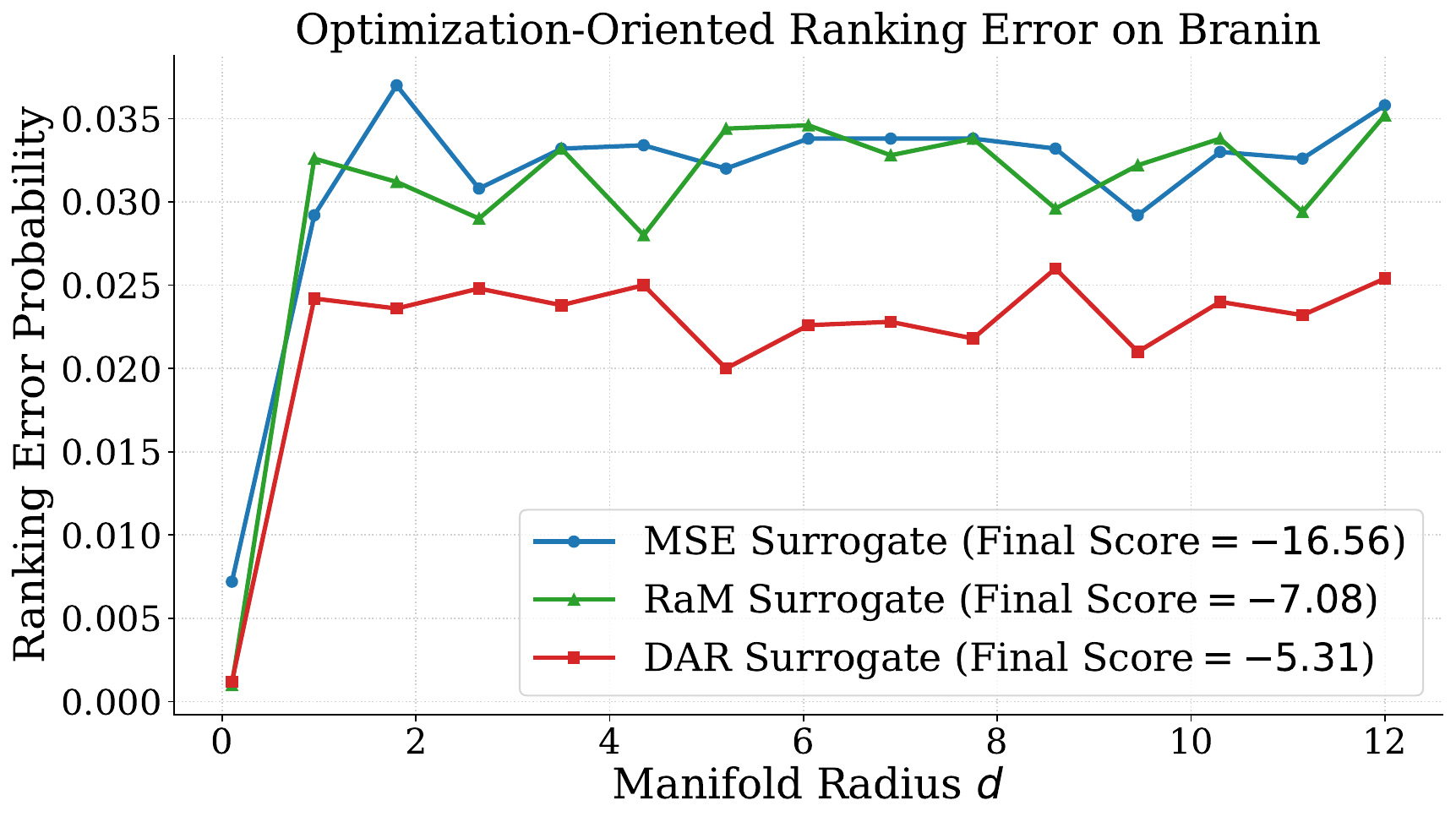}
    \caption{Comparison of surrogates trained with MSE, RaM, and DAR on optimization-oriented ranking error.}
    \label{fig:ranking-error}
    \vspace{-1.5em}
\end{figure}

\subsection{Analysis on Branin Function}
\label{sec:exp-branin}
% 在图~\ref{fig:branin}中，我们展示了Branin函数上离线MBO的可视化示意图。按照将表现最差的收集点作为离线训练数据集的通用协议~\citep{trabucco2022design}，我们使用表现最差的60%设计以均方误差损失训练代理模型。
In Figure~\ref{fig:branin}, we additionally provide a visualized illustration of offline MBO on the Branin function. Following the common protocol of treating the worst-performing collected points as the offline training dataset~\citep{trabucco2022design}, we train the surrogate model using the worst 60\% designs to create a shift between the training distribution and the true global optima.
Figure~\ref{fig:branin_b} and~\ref{fig:branin_c} deliver the predictive landscapes of surrogates trained with MSE and our proposed method, DAR, respectively. 
From the figures, we can observe that while the MSE-trained surrogate yields a smoothed and flattened landscape that fails to identify the high-scoring regions (especially missing the top left region), DAR successfully reconstructs the multi-modal topography of the Branin function. Notably, DAR sharply recovers all three distinct peaks associated with the true optima, demonstrating better capability in extrapolating to unseen promising regions.
% 在图~\ref{fig:branin_b}中，我们可以观察到，随着流形半径$\delta$的增大，期望排序误差$\mathcal{E}^{\mathrm{rank}}_\varepsilon(\theta)$呈现出持续上升的趋势，这证实了低训练误差并不能保证在OOD区域获得准确的排序。因此，如~\ref{fig:branin_c}所示，优化代理模型预测结果的模型内部搜索过程，必然会陷入模型给出高分的低分OOD区域。
% In Figure~\ref{fig:branin_b}, we can observe that as the manifold radius $\delta$ increases, the expected ranking error $\mathcal{E}^{\mathrm{rank}}_\varepsilon(\theta)$ shows a consistent increasing trend, confirming that low training error does not guarantee accurate ranking in OOD regions. As a result, the model-inner search procedure, which optimizes the surrogate model's prediction, would inevitably dive into low-scoring OOD regions where the model assigns high scores, as shown in Figure~\ref{fig:branin_c}.  

To better understand the performance of different surrogate models, we quantify the optimization-oriented ranking error $\mathcal{E}_{\varepsilon}^{\textrm{rank}}$ against the radius to the data manifold $d$. 
We set $\varepsilon=5\%$ to construct $S_\varepsilon$ and $S_{>\varepsilon}^ d = \{ \vx\mid \operatorname{dist}(\vx,\mathcal{M})\le d \}$. 
Then we calculate the ranking error probability between $S_\varepsilon$ and $S_{>\varepsilon}$ to approximate the expected global optimization-oriented ranking error.
We access the performance of the surrogate model trained with MSE, RaM~\citep{tan2024offline}~\footnote{As our training objective (i.e., Eq.~\eqref{eq:smooth_pairwise_loss}) is a pairwise ranking loss, to ensure a fair comparison, we set RankCosine~\citep{rankcosine}, the best-performing pairwise ranking loss reported in~\citet{tan2024offline}, as the default ranking loss for RaM.}, and DAR.
As shown in Figure~\ref{fig:ranking-error}, we find that our method, DAR, consistently achieves the lowest ranking error compared to the MSE and RaM across varying $ d$, and the lower ranking error of DAR directly correlates with superior final optimization scores compared to the baselines.
Notably, we additionally observe a consistent trend where the ranking error increases as the radius $ d$ grows, highlighting the intrinsic limitation of offline MBO demonstrated by Corollary~\ref{corollary:geometry}, i.e., inevitably poor optimization-oriented generalization in the out-of-distribution regions. 

\begin{table*}[!ht]
	\caption{100th percentile normalized score in Design-Bench, where the best and runner-up results on each task are \textbf{\textcolor{blue}{Blue}} and \textbf{\textcolor{violet}{Violet}}. $S$(best) denotes the best score in the offline dataset.
		Results of compared methods are either inherited from~\citet{tan2024offline} or run by us.
	}%\vspace{0.3em}
	\centering
	\resizebox{\linewidth}{!}{
 \begin{tabular}{l|ccccc|c}
			\toprule
			Method        & Ant                                    & D'Kitty                                & Superconductor                         & TF-Bind-8                              & TF-Bind-10                             & Avg. Rank                                      \\  \midrule 
			$S$(best) & 0.565 & 0.884 & 0.400 & 0.439 & 0.467 & /  \\ \midrule
			PGS~\citep{pgs}               & 0.715 ± 0.046                          & 0.954 ± 0.022                          & 0.444 ± 0.020                          & 0.889 ± 0.061                          & 0.634 ± 0.040                          & 5.4                              \\
			FGM~\citep{fgm}               & 0.923 ± 0.023                          & 0.944 ± 0.014                          & 0.481 ± 0.024                          & 0.811 ± 0.079                          & 0.611 ± 0.008                          & 5.8                              \\
			Match-OPT~\citep{hoang2024learning}         & 0.933 ± 0.016                          & 0.952 ± 0.008                          & 0.504 ± 0.021                          & 0.824 ± 0.067                          & 0.655 ± 0.050                          & 4.0                               \\
			GTG~\citep{gtg}               & 0.855 ± 0.044                          & 0.942 ± 0.017                          & 0.480 ± 0.055                          & 0.910 ± 0.040                          & 0.619 ± 0.029                          & 5.6                              \\ 
			RaM~\citep{tan2024offline}    & 0.940 ± 0.028                          & 0.951 ± 0.017                          & \textbf{\textcolor{blue}{0.514 ± 0.026}}                          & \textbf{\textcolor{violet}{0.982 ± 0.012}} & \textbf{\textcolor{violet}{0.675 ± 0.049}}    & \textbf{\textcolor{violet}{2.6}} \\
            ROOT~\citep{dao2025root} & \textbf{\textcolor{blue}{0.958 ± 0.012}} &  \textbf{\textcolor{blue}{0.967 ± 0.001}} & 0.451 ± 0.032 & 0.977 ± 0.015 &  0.652 ± 0.020 & 3.0 \\  \midrule
			\textbf{DAR (Ours)} & \textbf{\textcolor{violet}{0.941 ± 0.013}}                          & \textbf{\textcolor{violet}{0.959 ± 0.006}}                          & \textbf{\textcolor{violet}{0.506 ± 0.008}}                          & \textbf{\textcolor{blue}{0.988 ± 0.008}} & \textbf{\textcolor{blue}{0.686 ± 0.056}}    & \textbf{\textcolor{blue}{1.6}} \\
			\bottomrule
	\end{tabular}}
 \label{tab:main-results}
 \vspace{-1em}
\end{table*}

\subsection{Experiments on Design-Bench}
\label{sec:exp-db}
In this subsection, we conduct experiments on Design-Bench~\citep{trabucco2022design}, a prevalent benchmark for offline MBO, showing the superiority of our DAR method. 
% We first introduce the experimental settings and then deliver the results in Section~\ref{sec:exp-results}. 
% 实验设置部分如果超页可以挪到附录中
% \subsubsection{Experimental Settings}
% \label{sec:exp-settings}

\paragraph{Benchmark tasks.}
We conduct experiments on five design tasks from the Design-Bench~\citep{trabucco2022design}, covering both continuous and discrete design tasks. The continuous tasks include \textbf{Ant Morphology}~\citep{gym}, \textbf{D'Kitty Morphology}~\citep{robel}, and \textbf{Superconductor}~\citep{superconductor}. Ant and D'Kitty aim to optimize the physical structures of an ant-like robot and a quadruped robot, respectively, by searching over 60- and 56-dimensional continuous parameter spaces to maximize locomotion speed in physics-based simulators, while the goal of Superconductor is to identify an 86-dimensional superconducting
material that maximizes the critical temperature. The discrete tasks consist of \textbf{TF-Bind-8} and \textbf{TF-Bind-10}~\citep{tfbind}, where the design variables are DNA sequences of length 8 and 10, and the objective is to identify sequences that achieve high binding affinity with a target transcription factor. Together, these tasks provide a diverse set of benchmarks for evaluating design optimization methods across high-dimensional continuous control problems and combinatorial sequence design scenarios.

\paragraph{Compared methods.} We compare representative methods since 2024, which encompasses forward surrogate based methods and recent generative approaches. 
The first category includes PGS~\citep{pgs}, FGM~\citep{fgm}, Match-OPT~\citep{hoang2024learning}, and RaM~\citep{tan2024offline}, where methods typically learn a surrogate and optimize inside it. 
The generative methods contain GTG~\citep{gtg} and ROOT~\citep{dao2025root}, which model the distribution of the design variables $\vx$ and sample the maximizer.

\paragraph{Implementation details and hyperparameters.} Following recent works in offline MBO~\citep{trabucco2021conservative,tan2024offline}, we instantiate the surrogate model $h_\theta$ as an MLP with two layers, each of which contains 2048 units. ReLU is employed as the activation function. 
For each baseline, we adopt the optimized settings from the original papers.
For DAR, we choose the quantile $\varepsilon=0.2$, the ratio of high-quality training pairs $\lambda=0.1$, and the ranking margin $\beta=0.4$. We provide a detailed ablation results of these hyperparameters in Appendix~\ref{appendix:additional-exp}. 

\paragraph{Evaluation and metrics.} 
We follow the evaluation protocol of Design-Bench~\citep{trabucco2022design} to attach the performance of all compared methods. 
Specifically, we collect the 128 solutions proposed by the algorithm and report the 100th percentile normalized ground-truth score.
The score is normalized by $\frac{y-y_{\min}}{y_{\max}-y_{\min}}$, where $y_{\min}$ and $y_{\max}$ represent the lowest and highest scores in the full unseen dataset. 

% \subsubsection{Experimental Results}
% \label{sec:exp-results}
\paragraph{Experimental results.}
Table~\ref{tab:main-results} delivers the results of out experiments. Our method, DAR, achieves the best average rank of 1.6, while the second and third best methods, RaM and ROOT, obtain average ranks of 2.6 and 3.0, respectively.
Among all methods, DAR performs best on two discrete tasks, TF-Bind-8 and TF-Bind-10, and is runner-up on other three continuous tasks, Ant, D'Kitty, and Superconductor, showing the clear effectiveness of our proposed method.

\section{Discussion and Future Work}
In this section, we discuss the broader implications of our theoretical findings and outline several future directions.

\paragraph{Optimization dynamics.}
% 我们的分析聚焦于在接近最优和次优设计之间正确排序的可学习性，并隐含假设优化阶段能够可靠地找到代理模型的最大值。然而在实践中，离线模型优化依赖于具体的搜索过程，这些过程的行为具有路径依赖性，并对代理模型的局部特性敏感。即使具备正确的全局排序，优化仍可能因平滑性不足或梯度不稳定而失败。将理论扩展以纳入优化动态（如轨迹依赖性保证或搜索稳定性）是一个重要的研究方向。
Our analysis focuses on the learnability of correct rankings between near-optimal and suboptimal designs, and implicitly assumes that the optimization stage can reliably recover the surrogate maximizer. In practice, however, offline MBO depends on concrete search procedures whose behavior is path-dependent and sensitive to local surrogate properties. Even with a correct global ranking, optimization may fail due to poor smoothness or unstable gradients. Extending the theory to incorporate optimization dynamics, such as trajectory-dependent guarantees or search stability, is an important direction.

\paragraph{Data reshaping strategies.}
% 我们通过一种简单的基于分位数的数据重塑方案来实例化我们的理论，该方案直接针对主要的分布不匹配项。然而，这仅代表了可能方法中的有限子集。其他策略，包括软重加权、几何感知对构建或近最优区域的生成增强，在理论上都有充分的依据且尚未被探索。理解不同的数据重塑机制如何影响面向优化的泛化，是未来工作的一个很有前景的方向。
We instantiate our theory with a simple quantile-based data reshaping scheme, which directly targets the dominant distributional mismatch term. However, this represents only a limited subset of possible approaches. Alternative strategies, including soft reweighting, geometry-aware pair construction, and generative/active augmentation of near-optimal regions, are theoretically well-motivated and remain unexplored. Understanding how different data reshaping mechanisms affect optimization-oriented generalization is a promising avenue for future work.

\paragraph{Conservative optimization.}
% 尽管我们描述了离线模型优化（MBO）由于几何分离而在本质上不可靠的内在机制，但我们并未研究替代的决策目标。在高风险场景中，保守策略（如带拒绝的学习、基于弃权的优化或考虑分布外数据的排序）可能比无约束的最大化更可取。为这类保守或风险感知的离线MBO变体开发理论框架是未来研究的重要方向。
While we characterize intrinsic regimes where offline MBO is fundamentally unreliable due to geometric separation, we do not study alternative decision objectives. In high-risk settings, conservative strategies such as learning with rejection, abstention-based optimization, or OOD-aware ranking may be preferable to unconstrained maximization. Developing theoretical frameworks for such conservative or risk-aware variants of offline MBO is an important direction for future research.

\section{Conclusion}

In this work, we study offline MBO from a learnability perspective and show that correct ranking among high-quality designs, rather than accurate value prediction, is the appropriate objective for offline MBO. We develop an optimization-oriented theoretical framework that establishes tighter generalization guarantees for ranking loss than regression loss, identifies distributional mismatch as the dominant source of optimization error, and characterizes a fundamental limitation arising from the separation between near-optimal designs and the training data. Together, these results clarify both the potential and the inherent limits of offline MBO, and provide principled guidance for the design and evaluation of future offline optimization methods.

\section*{Acknowledgement}
This work was supported by the National Natural Science Foundation of China (No. 62306104, 62441225 and 62572171), Basic Research Program of Jiangsu (No. BK20253011), Hong Kong Scholars Program (No. XJ2024010), Research Grants Council of the Hong Kong Special Administrative Region, China (GRF Project No. CityU11212524), Natural Science Foundation of Jiangsu Province (No. BK20230949), Jiangsu Association for Science and Technology (No. JSTJ2024285), China Postdoctoral Science Foundation (No. 2023TQ0104), and the High Performance Computing Platform of Hohai University.

% Loading bibliography database
% \bibliographystyle{alpha}
\bibliography{ref}

@article{zhou2026learning,
	title={Learning design-score manifold to guide diffusion models for offline optimization},
	author={Zhou, Tailin and Chen, Zhilin and Lyu, Wenlong and Chen, Zhitang and Tsang, Danny HK and Zhang, Jun},
	journal={npj Artificial Intelligence},
	volume={2},
	number={1},
	pages={4},
	year={2026},
	publisher={Nature Publishing Group UK London}
}

@book{villani2008optimal,
	title={Optimal Transport: Old and New},
	author={Villani, C{\'e}dric},
	year={2008},
    address = {Berlin},
	publisher={Springer}
}

@inproceedings{qiansoo,
  title={{SOO-Bench}: Benchmarks for Evaluating the Stability of Offline Black-Box Optimization},
  author={Qian, Hong and Zhu, Yiyi and Shu, Xiang and Liu, Shuo and Wen, Yaolin and An, Xin and Lu, Huakang and Zhou, Aimin and Tang, Ke and Yu, Yang},
  booktitle={Proceedings of the 13th International Conference on Learning Representations},
  year={2025}
}

@inproceedings{hardware,
  author       = {Aviral Kumar and
                  Amir Yazdanbakhsh and
                  Milad Hashemi and
                  Kevin Swersky and
                  Sergey Levine},
  title        = {Data-Driven Offline Optimization for Architecting Hardware Accelerators},
  booktitle    = {Proceedings of the 10th International Conference on Learning Representations},
  year         = {2022},
}

@inproceedings{trabucco2022design,
  title={{Design-Bench}: Benchmarks for data-driven offline model-based optimization},
  author={Trabucco, Brandon and Geng, Xinyang and Kumar, Aviral and Levine, Sergey},
  booktitle={Proceedings of the 39th International Conference on Machine Learning},
  pages={21658--21676},
  year={2022},
  organization={PMLR}
}

@inproceedings{gulrajani2021in,
title={In Search of Lost Domain Generalization},
author={Ishaan Gulrajani and David Lopez-Paz},
booktitle={International Conference on Learning Representations},
year={2021}
}

@inproceedings{dao2024boosting,
  title={Boosting Offline Optimizers with Surrogate Sensitivity},
  author={Dao, Manh Cuong and Le Nguyen, Phi and Truong, Thao Nguyen and Hoang, Trong Nghia},
  booktitle={Proceedings of the 41st International Conference on Machine Learning},
  pages={10072--10090},
  year={2024}
}

@inproceedings{hoang2024learning,
  title={Learning surrogates for offline black-box optimization via gradient matching},
  author={Hoang, Minh and Fadhel, Azza and Deshwal, Aryan and Doppa, Janardhan Rao and Hoang, Trong Nghia},
  booktitle={Proceedings of the 41st International Conference on Machine Learning},
  pages={18374--18393},
  year={2024}
}

@inproceedings{fu2021offline,
title={Offline Model-Based Optimization via Normalized Maximum Likelihood Estimation},
author={Justin Fu and Sergey Levine},
booktitle={International Conference on Learning Representations},
year={2021}
}

@inproceedings{yu2021roma,
  title={{RoMA}: Robust model adaptation for offline model-based optimization},
  author={Yu, Sihyun and Ahn, Sungsoo and Song, Le and Shin, Jinwoo},
  booktitle={Advances in Neural Information Processing Systems 34},
  pages={4619--4631},
  year={2021}
}

@inproceedings{qi2022data,
  title={Data-driven offline decision-making via invariant representation learning},
  author={Qi, Han and Su, Yi and Kumar, Aviral and Levine, Sergey},
  booktitle={Advances in Neural Information Processing Systems 35},
  pages={13226--13237},
  year={2022}
}

@inproceedings{chen2023parallel,
  title={Parallel-mentoring for offline model-based optimization},
  author={Chen, Can Sam and Beckham, Christopher and Liu, Zixuan and Liu, Xue Steve and Pal, Chris},
  booktitle={Advances in Neural Information Processing Systems 36},
  pages={76619--76636},
  year={2023}
}

@inproceedings{trabucco2021conservative,
  title={Conservative objective models for effective offline model-based optimization},
  author={Trabucco, Brandon and Kumar, Aviral and Geng, Xinyang and Levine, Sergey},
  booktitle={Proceedings of the 38th International Conference on Machine Learning},
  pages={10358--10368},
  year={2021}
}

@inproceedings{dao2025root,
title={{ROOT}: Rethinking Offline Optimization as Distributional Translation via Probabilistic Bridge},
author={Manh Cuong Dao and The Hung Tran and Phi Le Nguyen and Thao Nguyen Truong and Trong Nghia Hoang},
booktitle={Advances in Neural Information Processing Systems 39},
year={2025}
}

@inproceedings{pan2024model,
  title={Model-based diffusion for trajectory optimization},
  author={Pan, Chaoyi and Yi, Zeji and Shi, Guanya and Qu, Guannan},
  booktitle={Advances in Neural Information Processing Systems 37},
  pages={57914--57943},
  year={2024}
}

@article{kim2025offline,
  title={Offline Model-Based Optimization: Comprehensive Review},
  author={Kim, Minsu and Gu, Jiayao and Yuan, Ye and Yun, Taeyoung and Liu, Zixuan and Bengio, Yoshua and Chen, Can},
  journal={Transactions on Machine Learning Research},
  year={2026}
}

@inproceedings{tan2024offline,
  title={Offline model-based optimization by learning to rank},
  author={Tan, Rong-Xi and Xue, Ke and Lyu, Shen-Huan and Shang, Haopu and Wang, Yao and Wang, Yaoyuan and Fu, Sheng and Qian, Chao},
  booktitle={Proceedings of the 13th International Conference on Learning Representations},
  year={2025}
}

@inproceedings{shin2025offline,
  title={Offline Model-based Optimization for Real-World Molecular Discovery},
  author={Shin, Dong-Hee and Son, Young-Han and Lee, Hyun Jung and Lee, Deok-Joong and Kam, Tae-Eui},
  booktitle={Proceedings of the 42nd International Conference on Machine Learning},
  pages={3485-3494},
  year={2025}
}

@inproceedings{chen2023bidirectional,
  title={Bidirectional learning for offline model-based biological sequence design},
  author={Chen, Can and Zhang, Yingxue and Liu, Xue and Coates, Mark},
  booktitle={Proceedings of the 40th International Conference on Machine Learning},
  pages={5351--5366},
  year={2023}
}

@article{takizawa2025safe,
  title={Safe model based optimization balancing exploration and reliability for protein sequence design},
  author={Takizawa, Shuuki and Mori, Keita and Tanishiki, Naoto and Yoshimura, Dai and Ohta, Atsushi and Teramoto, Reiji},
  journal={Scientific Reports},
  volume={15},
  number={1},
  pages={27568},
  year={2025},
  publisher={Nature}
}

@article{wang2025model,
  title={Model-Based System Multidisciplinary Design Optimization for Preliminary Design of a Blended Wing-Body Underwater Glider},
  author={Wang, Zhi-Long and Li, Jing-Lu and Wang, Peng and Dong, Hua-Chao and Wang, Xin-Jing and Wen, Zhi-Wen},
  journal={China Ocean Engineering},
  volume={39},
  number={4},
  pages={755--767},
  year={2025},
  publisher={Springer}
}

@inproceedings{yuan2023importance,
  title={Importance-aware co-teaching for offline model-based optimization},
  author={Yuan, Ye and Chen, Can Sam and Liu, Zixuan and Neiswanger, Willie and Liu, Xue Steve},
  booktitle={Advances in Neural Information Processing Systems 36},
  pages={55718--55733},
  year={2023}
}

@article{gym,
  title={Open{AI} {G}ym},
  author={Greg Brockman and Vicki Cheung and Ludwig Pettersson and Jonas Schneider and John Schulman and Jie Tang and Wojciech Zaremba},
  journal={arXiv:1606.01540},
  year={2016},
}

@InProceedings{robel,
  title = 	 {{ROBEL}: {R}obotics Benchmarks for Learning with Low-Cost Robots},
  author =       {Ahn, Michael and Zhu, Henry and Hartikainen, Kristian and Ponte, Hugo and Gupta, Abhishek and Levine, Sergey and Kumar, Vikash},
  booktitle = 	 {Proceedings of the 4th Conference on Robot Learning},
  pages = 	 {1300--1313},
  year = 	 {2020},
}

@article{superconductor,
  title={A data-driven statistical model for predicting the critical temperature of a superconductor},
  author={Hamidieh, Kam},
  journal={Computational {M}aterials {S}cience},
  volume={154},
  pages={346--354},
  year={2018},
  publisher={Elsevier}
}

@article{tfbind,
  title={Survey of variation in human transcription factors reveals prevalent {DNA} binding changes},
  author={Luis A. Barrera and Anastasia Vedenko and Jesse V. Kurland and Julia M. Rogers and Stephen S. Gisselbrecht and Elizabeth J. Rossin and others},
  journal={Science},
  volume={351},
  number={6280},
  pages={1450--1454},
  year={2016},
  publisher={American Association for the Advancement of Science}
}

@inproceedings{pgs,
  title={Offline model-based optimization via policy-guided gradient search},
  author={Chemingui, Yassine and Deshwal, Aryan and Hoang, Trong Nghia and Doppa, Janardhan Rao},
  booktitle={Proceedings of the 38th AAAI Conference on Artificial Intelligence},
  pages={11230--11239},
  year={2024}
}

@InProceedings{fgm,
  title = 	 {Functional Graphical Models: Structure Enables Offline Data-Driven Optimization},
  author={Grudzien, Kuba and Uehara, Masatoshi and Levine, Sergey and Abbeel, Pieter},
  booktitle = 	 {Proceedings of the 27th International Conference on Artificial Intelligence and Statistics},
  pages = 	 {2449--2457},
  year = 	 {2024},
}

@inproceedings{gtg,
  title={Guided Trajectory Generation with Diffusion Models for Offline Model-based Optimization},
  author={Yun, Taeyoung and Yun, Sujin and Lee, Jaewoo and Park, Jinkyoo},
 booktitle = {Advances in Neural Information Processing Systems 37},
pages={83847--83876},
 year = {2024}
}

@article{rankcosine,
  title={Query-level loss functions for information retrieval},
  author={Tao Qin and Xu-Dong Zhang and Ming-Feng Tsai and De-Sheng Wang and Tie-Yan Liu and Hang Li},
  journal={Information Processing \& Management},
  year={2008},
  volume={44},
  pages={838--855},
}

@inproceedings{
reddy2024designing,
title={Designing Cell-Type-Specific Promoter Sequences Using Conservative Model-Based Optimization},
author={Aniketh Janardhan Reddy and Xinyang Geng and Michael H Herschl and Sathvik Kolli and Aviral Kumar and Patrick D Hsu and Sergey Levine and Nilah M Ioannidis},
booktitle={Advances in Neural Information Processing Systems 37},
year={2024},
pages={93033--93059}
}

\newpage
\appendix
\onecolumn

\section{Detailed Algorithmic Description}\label{sec:app-alg}

In this section, we present a detailed description for our distribution-aware ranking (DAR) method in Algorithm~\ref{alg:improved_rank}.
\begin{algorithm}[ht]
\caption{Distribution-Aware Ranking (DAR) for Offline Model-Based Optimization}
\begin{algorithmic}[1]
\REQUIRE Offline dataset $S=\{(\vx_i,y_i)\}_{i=1}^n$; quantile $\varepsilon$; margin $\beta$; intra-region ratio $\lambda$; step size $\eta$; number of steps $T$
\ENSURE Optimized design $\vx^\star$

\STATE Compute the $\varepsilon$-quantile $q_\varepsilon$ of $\{y_i\}_{i=1}^n$
\STATE $S_\varepsilon \leftarrow \{\vx_i \mid y_i \ge q_\varepsilon\}$
\STATE $S_{>\varepsilon} \leftarrow S \setminus S_\varepsilon$

\STATE Initialize ranking model parameters $\theta$

\FOR{each training iteration}
    \STATE Sample $(\vx_1,\vx_2)$ and $u\sim U(0,1)$:
    \IF{$u \leq 1-\lambda$} %0-1均匀采样
        \STATE $\vx_1 \sim S_\varepsilon,\;\vx_2 \sim S_{>\varepsilon}$
    \ELSE
        \STATE $\vx_1,\vx_2 \sim S_\varepsilon$
    \ENDIF
    \STATE Compute loss
    $\displaystyle
    \mathcal{L}(\vx_1,\vx_2)
    =
    \max\bigl(0,\,
    \beta - (h_\theta(\vx_1)-h_\theta(\vx_2))
    \bigr)
    $
    \STATE Update $\theta$ using gradient descent on $\mathcal{L}$
\ENDFOR
\STATE Evaluate $h_\theta(\vx_i)$ for all $\vx_i \in S$
\STATE Compute prediction mean $\tilde{\mu}$ and standard deviation $\tilde{\sigma}$
\STATE Define normalized surrogate
$\displaystyle
\tilde{h}_\theta(\vx)
=
\frac{h_\theta(\vx)-\tilde{\mu}}{\tilde{\sigma}}
$
\STATE Initialize $\vx^{(0)} \sim S_\varepsilon$
\FOR{$t=1$ to $T$}
    \STATE $\vx^{(t)} \leftarrow \vx^{(t-1)} + \eta \nabla_{\vx}\tilde{h}_\theta(\vx^{(t-1)})$
\ENDFOR
\RETURN $\vx^\star \leftarrow \vx^{(T)}$

\end{algorithmic}
\label{alg:improved_rank}
\end{algorithm}

\section{Proofs}\label{sec:app-proof}
% 在本节中，我们将逐个证明Section 3中的引理、定理和推论
In this section, we present the proofs of the lemmas, theorems, and corollaries stated in Section~\ref{sec:theory}.

\subsection{Proof of Lemma~\ref{lemma:uniform_convergence}}
\begin{proof}
We rewrite the statement carefully to reflect the dependence induced by forming all ranked pairs from a size-$m$ dataset.

\textbf{Step 0: Setup and the dependent empirical process.}
Let $S=\{(\vx_i,y_i)\}_{i=1}^m$ be the underlying dataset. Define the admissible ranked-pair index set
\[
\mathcal{P}_S:=\{(i,j)\in[m]^2:\ y_i>y_j\},\qquad |\mathcal{P}_S|\ge 1,
\]
and for each $\theta\in\Theta$ define the (dependent) empirical ranking risk
\[
\widehat{\mathcal{E}}_{\mathrm{tr}}^{\mathrm{rank}}(\theta)
:=
\frac{1}{|\mathcal{P}_S|}
\sum_{(i,j)\in\mathcal{P}_S}
\ell_{\mathrm{rank}}(h_\theta;\vx_i,\vx_j).
\]
Let the population training-pair risk be
\[
\mathcal{E}_{Q_{\mathrm{tr}}}(\theta)
:=
\mathbb{E}_{(\vx,\vx')\sim Q_{\mathrm{tr}}}\big[\ell_{\mathrm{rank}}(h_\theta;\vx,\vx')\big].
\]
Define the loss class
\[
\mathcal{F}
:=
\Bigl\{
(\vx,\vx')\mapsto \ell_{\mathrm{rank}}(h_\theta;\vx,\vx'):\ \theta\in\Theta
\Bigr\}.
\]
By Assumption~\ref{assump:bounded}, every $f\in\mathcal{F}$ takes values in $[0,1]$.

Our goal is to control, with high probability over $S$,
\[
\sup_{\theta\in\Theta}\Bigl|\mathcal{E}_{Q_{\mathrm{tr}}}(\theta)-\widehat{\mathcal{E}}_{\mathrm{tr}}^{\mathrm{rank}}(\theta)\Bigr|.
\]
The key difficulty is that $\widehat{\mathcal{E}}_{\mathrm{tr}}^{\mathrm{rank}}(\theta)$ averages \emph{dependent} terms because many pairs share the same $\vx_i$.

\textbf{Step 1: Reduce the full-pair average to an average over disjoint pairs.}
Let $\pi$ be a uniformly random permutation of $[m]$ and set $n:=\lfloor m/2\rfloor$. Form $n$ \emph{disjoint} ranked pairs
\[
(\tilde \vx_k,\tilde \vx_k')
:=
(\vx_{\pi(2k-1)},\vx_{\pi(2k)}),
\qquad k=1,\dots,n.
\]
(If $m$ is odd, one element is unused; this only changes constants.)
Define the \emph{paired} empirical risk
\[
\widetilde{\mathcal{E}}_{\pi}(\theta)
:=
\frac{1}{n}\sum_{k=1}^n
\ell_{\mathrm{rank}}(h_\theta;\tilde \vx_k,\tilde \vx_k').
\]
Conditioned on $S$, the pairs in $\widetilde{\mathcal{E}}_{\pi}(\theta)$ are dependent through $\pi$, but crucially they do \emph{not} reuse any point within the same sum, which will allow bounded-difference control with respect to the $m$ underlying examples.

Moreover, by symmetry of random permutation, each ranked pair of distinct indices $(i,j)$ has the same probability to appear as $(\pi(2k-1),\pi(2k))$ for some $k$. Consequently, for any fixed $\theta$ and fixed dataset $S$,
\[
\mathbb{E}_\pi\big[\widetilde{\mathcal{E}}_{\pi}(\theta)\mid S\big]
=
\frac{1}{m(m-1)}\sum_{i\neq j}\ell_{\mathrm{rank}}(h_\theta;\vx_i,\vx_j).
\]
In particular, if the training procedure uses the full set of ranked pairs (or a large subset of them) in a way that is dominated by the complete ranked-pair average, then $\widehat{\mathcal{E}}_{\mathrm{tr}}^{\mathrm{rank}}(\theta)$ can be controlled via $\widetilde{\mathcal{E}}_{\pi}(\theta)$ up to constants. To keep the lemma aligned with the paper's notation, we proceed by proving uniform convergence for $\widetilde{\mathcal{E}}_{\pi}(\theta)$, and then note that the same bound applies to $\widehat{\mathcal{E}}_{\mathrm{tr}}^{\mathrm{rank}}(\theta)$ since both are bounded $[0,1]$-valued U-statistic-type estimators and satisfy the same bounded-difference property in Step 3 below. (This is the standard U-process-to-i.i.d.-pair reduction.)

\textbf{Step 2: Symmetrization and Rademacher complexity on pairs.}
Let $Z:=\{(\tilde \vx_k,\tilde \vx_k')\}_{k=1}^n$ denote the induced set of $n$ disjoint pairs.
Define the empirical Rademacher complexity (as in the paper, but on the pair domain $\mathcal{X}\times\mathcal{X}$)
\[
\hat{\mathfrak{R}}_Z(\mathcal{F})
:=
\mathbb{E}_\sigma\left[
\sup_{\theta\in\Theta}
\frac{1}{n}\sum_{k=1}^n
\sigma_k\,
\ell_{\mathrm{rank}}(h_\theta;\tilde \vx_k,\tilde \vx_k')
\right],
\]
where $\sigma_1,\dots,\sigma_n$ are i.i.d. Rademacher variables.

Introduce an independent \emph{ghost} paired sample $Z':=\{(\tilde \vx_k^{(g)},\tilde \vx_k^{(g)\prime})\}_{k=1}^n$ drawn from $Q_{\mathrm{tr}}$ (i.i.d. pairs), and define the corresponding ghost empirical risk
\[
\widetilde{\mathcal{E}}^{(g)}(\theta)
:=
\frac{1}{n}\sum_{k=1}^n
\ell_{\mathrm{rank}}(h_\theta;\tilde \vx_k^{(g)},\tilde \vx_k^{(g)\prime}).
\]
Then $\mathbb{E}[\widetilde{\mathcal{E}}^{(g)}(\theta)]=\mathcal{E}_{Q_{\mathrm{tr}}}(\theta)$.
By standard symmetrization (applied on the pair domain),
\begin{align*}
\mathbb{E}\left[\sup_{\theta\in\Theta}\Bigl(\mathcal{E}_{Q_{\mathrm{tr}}}(\theta)-\widetilde{\mathcal{E}}_{\pi}(\theta)\Bigr)\right]
&=
\mathbb{E}\left[\sup_{\theta\in\Theta}\Bigl(\mathbb{E}[\widetilde{\mathcal{E}}^{(g)}(\theta)]-\widetilde{\mathcal{E}}_{\pi}(\theta)\Bigr)\right]\\
&\le
\mathbb{E}\left[\sup_{\theta\in\Theta}\Bigl(\widetilde{\mathcal{E}}^{(g)}(\theta)-\widetilde{\mathcal{E}}_{\pi}(\theta)\Bigr)\right]\\
&\le
2\,
\mathbb{E}\left[
\hat{\mathfrak{R}}_Z(\mathcal{F})
\right],
\end{align*}
where in the last inequality we introduced Rademacher signs $\sigma_k$ and used the fact that $\ell_{\mathrm{rank}}(h_\theta;\cdot,\cdot)\in[0,1]$ (boundedness suffices for this standard symmetrization bound).

\textbf{Step 3: High-probability uniform convergence via bounded differences (dependence-aware).}
Define the random variable
\[
\Phi(S)
:=
\sup_{\theta\in\Theta}
\Bigl|\mathcal{E}_{Q_{\mathrm{tr}}}(\theta)-\widehat{\mathcal{E}}_{\mathrm{tr}}^{\mathrm{rank}}(\theta)\Bigr|.
\]
We now show $\Phi(S)$ satisfies a bounded-differences condition with respect to the $m$ underlying examples, \emph{despite} the fact that $\widehat{\mathcal{E}}_{\mathrm{tr}}^{\mathrm{rank}}$ averages $|\mathcal{P}_S|$ dependent pair terms.

Let $S^{(i)}$ be a dataset obtained by replacing only the $i$-th example in $S$ with an arbitrary $(\vx_i',y_i')$. Consider the change in the empirical risk for a fixed $\theta$:
only pairs that involve index $i$ can change, i.e., pairs of the form $(i,j)$ or $(j,i)$ with $j\neq i$.
There are at most $2(m-1)$ such ranked pairs. Since each loss term lies in $[0,1]$, the total change in the \emph{average} over all admissible pairs is at most
\[
\bigl|\widehat{\mathcal{E}}_{\mathrm{tr}}^{\mathrm{rank}}(\theta;S)-\widehat{\mathcal{E}}_{\mathrm{tr}}^{\mathrm{rank}}(\theta;S^{(i)})\bigr|
\le
\frac{2(m-1)}{|\mathcal{P}_S|}\cdot \frac{1}{|\mathcal{P}_S|}
\le
\frac{2}{m},
\]
where the final inequality uses the crude but standard scaling $|\mathcal{P}_S|\ge m(m-1)/2$ when labels are non-degenerate; more generally one may replace $2/m$ by $2(m-1)/|\mathcal{P}_S|$, which still yields the same $O(1/\sqrt{m})$ rate as long as $|\mathcal{P}_S|=\Omega(m^2)$. Taking the supremum over $\theta$ cannot increase the Lipschitz constant, hence
\[
|\Phi(S)-\Phi(S^{(i)})|\le \frac{2}{m}.
\]
McDiarmid's inequality then yields that with probability at least $1-\delta$,
\[
\Phi(S)
\le
\mathbb{E}[\Phi(S)]
+
\sqrt{\frac{\log(1/\delta)}{2m}}.
\]

\textbf{Step 4: Combine expectation and concentration.}
By Step 2, $\mathbb{E}[\Phi(S)]\le 2\,\hat{\mathfrak{R}}_Z(\mathcal{F})$ (absorbing expectations over $Z$ into the empirical complexity as stated in the lemma).
Therefore, with probability at least $1-\delta$,
\[
\sup_{\theta\in\Theta}
\left|
\mathcal{E}_{Q_{\mathrm{tr}}}(\theta)-\widehat{\mathcal{E}}_{\mathrm{tr}}^{\mathrm{rank}}(\theta)
\right|
\le
2\,\hat{\mathfrak{R}}_Z(\mathcal{F})
+
3\sqrt{\frac{\log(1/\delta)}{2m}},
\]
which is exactly the desired claim.
\end{proof}

\subsection{Proof of Theorem~\ref{thm:generalization}}
\begin{proof}
Fix any $\theta\in\Theta$. Start from the target ranking error (Definition~\ref{eq:population_rank}):
\[
\mathcal{E}^{\mathrm{rank}}_\varepsilon(\theta)
=
\mathbb{E}_{(\vx,\vx')\sim Q_\varepsilon}
\big[
\ell_{\mathrm{rank}}(h_\theta;\vx,\vx')
\big].
\]
Add and subtract the population training-pair risk under $Q_{\mathrm{tr}}$:
\begin{align*}
\mathcal{E}^{\mathrm{rank}}_\varepsilon(\theta)
&=
\mathbb{E}_{Q_{\mathrm{tr}}}\big[\ell_{\mathrm{rank}}(h_\theta;\vx,\vx')\big]
+
\Big(
\mathbb{E}_{Q_\varepsilon}\big[\ell_{\mathrm{rank}}(h_\theta;\vx,\vx')\big]
-
\mathbb{E}_{Q_{\mathrm{tr}}}\big[\ell_{\mathrm{rank}}(h_\theta;\vx,\vx')\big]
\Big)\\
&\le
\mathbb{E}_{Q_{\mathrm{tr}}}\big[\ell_{\mathrm{rank}}(h_\theta;\vx,\vx')\big]
+
\sup_{\theta'\in\Theta}\left|
\mathbb{E}_{Q_\varepsilon}\ell_{\mathrm{rank}}(h_{\theta'};\vx,\vx')
-
\mathbb{E}_{Q_{\mathrm{tr}}}\ell_{\mathrm{rank}}(h_{\theta'};\vx,\vx')
\right|\\
&=
\mathbb{E}_{Q_{\mathrm{tr}}}\big[\ell_{\mathrm{rank}}(h_\theta;\vx,\vx')\big]
+
\Delta_\varepsilon(\Theta).
\end{align*}

Now apply Lemma~\ref{lemma:uniform_convergence} to the training distribution $Q_{\mathrm{tr}}$ and the empirical estimator $\widehat{\mathcal{E}}_{\mathrm{tr}}^{\mathrm{rank}}(\theta)$ defined in~\eqref{eq:empirical_rank_tr}. Under Assumptions~\ref{assump:bounded}, with probability at least $1-\delta$,
\[
\sup_{\theta\in\Theta}
\left|
\mathbb{E}_{Q_{\mathrm{tr}}}\ell_{\mathrm{rank}}(h_\theta;\vx,\vx')
-
\widehat{\mathcal{E}}_{\mathrm{tr}}^{\mathrm{rank}}(\theta)
\right|
\le
2\,\hat{\mathfrak{R}}_Z(\mathcal{F})
+
3\sqrt{\frac{\log(1/\delta)}{2m}}.
\]
In particular, for our fixed $\theta$,
\[
\mathbb{E}_{Q_{\mathrm{tr}}}\ell_{\mathrm{rank}}(h_\theta;\vx,\vx')
\le
\widehat{\mathcal{E}}_{\mathrm{tr}}^{\mathrm{rank}}(\theta)
+
2\,\hat{\mathfrak{R}}_Z(\mathcal{F})
+
3\sqrt{\frac{\log(1/\delta)}{2m}}.
\]
Combining with the earlier decomposition yields, with probability at least $1-\delta$,
\begin{align*}
\mathcal{E}^{\mathrm{rank}}_\varepsilon(\theta)
&\le
\mathbb{E}_{Q_{\mathrm{tr}}}\ell_{\mathrm{rank}}(h_\theta;\vx,\vx')
+
\Delta_\varepsilon(\Theta)\\
&\le
\widehat{\mathcal{E}}_{\mathrm{tr}}^{\mathrm{rank}}(\theta)
+
2\,\hat{\mathfrak{R}}_Z(\mathcal{F})
+
3\sqrt{\frac{\log(1/\delta)}{2m}}
+
\Delta_\varepsilon(\Theta),
\end{align*}
which is exactly the claimed bound in Theorem~\ref{thm:generalization}.
\end{proof}

\subsection{Proof of Lemma~\ref{lemma:mse-to-rank}}

\begin{proof}
	Fix any pair $(\vx,\vx')$ in the support of $Q_\varepsilon$, and define
	\begin{equation*}
		e(\vx) := h_\theta(\vx) - f(\vx),
		\qquad
		e(\vx') := h_\theta(\vx') - f(\vx').
	\end{equation*}
	Then, we have
	\begin{equation*}
		f(\vx) - f(\vx') \ge \gamma.
	\end{equation*}
	Consider the event that $h_\theta$ mis-ranks the pair, i.e.,
	\begin{equation*}
		\ell_{\mathrm{rank}}(h_\theta;\vx,\vx')
		=
		\mathbf{1}\{h_\theta(\vx)\le h_\theta(\vx')\} = 1.
	\end{equation*}
	On this event,
	\begin{equation*}
		h_\theta(\vx) - h_\theta(\vx')
		\le 0.
	\end{equation*}
	Rewrite the surrogate score difference as
	\begin{align*}
		h_\theta(\vx) - h_\theta(\vx')
		&= 
		\big(f(\vx) + e(\vx)\big)
		-
		\big(f(\vx') + e(\vx')\big) \\
		&= 
		\underbrace{f(\vx) - f(\vx')}_{\ge \gamma}
		+ e(\vx) - e(\vx').
	\end{align*}
	If both $f(\vx)-f(\vx') \ge \gamma$ and $h_\theta(\vx)-h_\theta(\vx') \le 0$ hold,
	then
	\begin{equation*}
		e(\vx) - e(\vx') 
		\le 
		- \big(f(\vx) - f(\vx')\big)
		\le -\gamma,
	\end{equation*}
	so that
	\begin{equation*}
		|e(\vx) - e(\vx')|
		\ge \gamma.
	\end{equation*}
	Using the triangle inequality,
	\begin{equation*}
		|e(\vx)| + |e(\vx')|
		\ge
		|e(\vx) - e(\vx')|
		\ge \gamma,
	\end{equation*}
	which implies that at least one of $|e(\vx)|,|e(\vx')|$ must be at least $\gamma/2$.
	Therefore,
	\begin{equation*}
		\mathbf{1}\{h_\theta(\vx)\le h_\theta(\vx')\}
		\le
		\mathbf{1}\big\{|e(\vx)|\ge \tfrac{\gamma}{2}\big\}
		+
		\mathbf{1}\big\{|e(\vx')|\ge \tfrac{\gamma}{2}\big\}.
	\end{equation*}
	Applying the elementary inequality $ \mathbf{1}\{|u|\ge a\} \le u^2/a^2$ for $a>0$
	with $u=e(\cdot)$ and $a=\gamma/2$, we obtain
	\begin{equation*}
		\mathbf{1}\big\{|e(\vx)|\ge \tfrac{\gamma}{2}\big\}
		\le
		\frac{4}{\gamma^2} e(\vx)^2,
		\qquad
		\mathbf{1}\big\{|e(\vx')|\ge \tfrac{\gamma}{2}\big\}
		\le
		\frac{4}{\gamma^2} e(\vx')^2.
	\end{equation*}
	Hence,
	\begin{equation*}
		\ell_{\mathrm{rank}}(h_\theta;\vx,\vx')
		\le
		\frac{4}{\gamma^2}
		\big(
		e(\vx)^2 + e(\vx')^2
		\big).
	\end{equation*}
	
	Taking expectation over $(\vx,\vx')\sim Q_\varepsilon(\vx,\vx') = 
	\rho_\varepsilon(\vx)\,\rho_{>\varepsilon}(\vx')$ yields
	\begin{align*}
		\mathcal{E}^{\mathrm{rank}}_\varepsilon(\theta)
		&=
		\mathbb{E}_{(\vx,\vx')\sim Q_\varepsilon}
		\big[
		\ell_{\mathrm{rank}}(h_\theta;\vx,\vx')
		\big] \\
		&\le
		\frac{4}{\gamma^2}
		\mathbb{E}_{(\vx,\vx')\sim Q_\varepsilon}
		\big[
		e(\vx)^2 + e(\vx')^2
		\big] \\
		&=
		\frac{4}{\gamma^2}
		\Big(
		\mathbb{E}_{\vx\sim\rho_\varepsilon}[e(\vx)^2]
		+
		\mathbb{E}_{\vx'\sim\rho_{>\varepsilon}}[e(\vx')^2]
		\Big) \\
		&=
		\frac{4}{\gamma^2}
		\Big(
		R^{\mathrm{mse}}_\varepsilon(\theta)
		+
		R^{\mathrm{mse}}_{>\varepsilon}(\theta)
		\Big),
	\end{align*}
	which is the desired inequality.
\end{proof}

\subsection{Proof of Theorem~\ref{thm:rank-vs-mse}}
\begin{proof}
Let $\theta\in\arg\min_{\theta\in\Theta}\widehat{R}^{\mathrm{mse}}(\theta)$ be an empirical risk minimizer for the MSE loss trained on the $m$ samples in $S$.

By Lemma~\ref{lemma:mse-to-rank}, for any $\theta\in\Theta$ (hence also for $\theta$),
\[
\mathcal{E}^{\mathrm{rank}}_\varepsilon(\theta)
\le
\frac{4}{\gamma^2}
\Big(
R^{\mathrm{mse}}_\varepsilon(\theta)
+
R^{\mathrm{mse}}_{>\varepsilon}(\theta)
\Big).
\]

\textbf{Step 1: Relate the two-region MSE to the overall MSE risk.}
Define a mixture distribution $\rho$ on $\mathcal{X}$ that assigns equal mass to $\rho_\varepsilon$ and $\rho_{>\varepsilon}$:
\[
\rho
:=
\frac{1}{2}\rho_\varepsilon+\frac{1}{2}\rho_{>\varepsilon}.
\]
Define the corresponding population MSE under $\rho$:
\[
R^{\mathrm{mse}}(\theta)
:=
\mathbb{E}_{\vx\sim\rho}\bigl[(h_\theta(\vx)-f(\vx))^2\bigr]
=
\frac{1}{2}\Big(
R^{\mathrm{mse}}_\varepsilon(\theta)+R^{\mathrm{mse}}_{>\varepsilon}(\theta)
\Big).
\]
Thus,
\[
R^{\mathrm{mse}}_\varepsilon(\theta)+R^{\mathrm{mse}}_{>\varepsilon}(\theta)
=
2\,R^{\mathrm{mse}}(\theta).
\]
Applying this to $\theta$ and substituting into Step 1 yields
\[
\mathcal{E}^{\mathrm{rank}}_\varepsilon(\theta)
\le
\frac{8}{\gamma^2}\,R^{\mathrm{mse}}(\theta).
\]

\textbf{Step 2: Generalization bound for MSE ERM via Rademacher complexity.}
Let the squared-loss class be
\[
\mathcal{F}
:=
\Bigl\{
\vx\mapsto (h_\theta(\vx)-f(\vx))^2:\ \theta\in\Theta
\Bigr\},
\]
and assume (as in the paper's boundedness-style assumptions) that the squared loss is uniformly bounded in $[0,1]$ on $\mathcal{X}$ for all $\theta\in\Theta$ (this is a standard sufficient condition to invoke the same uniform convergence machinery as in Lemma~\ref{lemma:uniform_convergence}, now on \emph{pointwise} samples rather than pairs).
Then the standard Rademacher uniform convergence bound for pointwise ERM implies that with probability at least $1-\delta$,
\[
\sup_{\theta\in\Theta}
\left|
R^{\mathrm{mse}}(\theta)-\widehat{R}^{\mathrm{mse}}(\theta)
\right|
\le
2\,\hat{\mathfrak{R}}_Z(\mathcal{F})
+
3\sqrt{\frac{\log(1/\delta)}{2m}}.
\]
In particular, for $\theta$,
\[
R^{\mathrm{mse}}(\theta)
\le
\widehat{R}^{\mathrm{mse}}(\theta)
+
2\,\hat{\mathfrak{R}}_Z(\mathcal{F})
+
3\sqrt{\frac{\log(1/\delta)}{2m}}.
\]

\textbf{Step 3: Combine the reduction and the MSE generalization bound.}
Substituting the inequality from Step 3 into the inequality from Step 2 yields that with probability at least $1-\delta$,
\begin{align*}
\mathcal{E}^{\mathrm{rank}}_\varepsilon(\theta)
&\le
\frac{8}{\gamma^2}\,R^{\mathrm{mse}}(\theta)\\
&\le
\frac{8}{\gamma^2}
\Big(
\widehat{R}^{\mathrm{mse}}(\theta)
+
2\,\hat{\mathfrak{R}}_Z(\mathcal{F})
+
3\sqrt{\frac{\log(1/\delta)}{2m}}
\Big),
\end{align*}
which is exactly the bound claimed in Theorem~\ref{thm:rank-vs-mse}.
\end{proof}

\subsection{Proof of Lemma~\ref{lemma:wasserstein_bound}}

%%lemma1%%

\begin{proof}
Fix any $\theta \in \Theta$ and write
\begin{equation*}
    g(\vx,\vx')
    :=
    \tilde\ell_{\mathrm{rank}}(h_\theta;\vx,\vx').
\end{equation*}
By Assumption~\ref{assump:lipschitz}, $g$ is $L$-Lipschitz with respect to 
$d_{\mathrm{pair}}$, i.e.,
\begin{equation*}
    \big|g(\vx,\vx') - g(\vz,\vz')\big|
    \le
    L\, d_{\mathrm{pair}}\big((\vx,\vx'),(\vz,\vz')\big)
    \quad\text{for all } (\vx,\vx'),(\vz,\vz')\in\mathcal{X}\times\mathcal{X}.
\end{equation*}

Let $Q_\varepsilon$ and $Q_{\mathrm{tr}}$ be the two probability measures on
$\mathcal{X}\times\mathcal{X}$ endowed with the metric $d_{\mathrm{pair}}$.
By the primal formulation of the 1-Wasserstein distance, we have
\begin{equation*}
    W_1(Q_\varepsilon,Q_{\mathrm{tr}})
    =
    \inf_{\pi\in\Gamma(Q_\varepsilon,Q_{\mathrm{tr}})}
    \int
        d_{\mathrm{pair}}\big((\vx,\vx'),(\vz,\vz')\big)\,
        \mathrm{d}\pi\big((\vx,\vx'),(\vz,\vz')\big),
\end{equation*}
where $\Gamma(Q_\varepsilon,Q_{\mathrm{tr}})$ denotes the set of all couplings
$\pi$ on $(\mathcal{X}\times\mathcal{X})^2$ whose first marginal is $Q_\varepsilon$
and second marginal is $Q_{\mathrm{tr}}$.

Let $\pi^\star \in \Gamma(Q_\varepsilon,Q_{\mathrm{tr}})$ be an optimal coupling
(realizing the infimum, or an $\varepsilon$-optimal coupling if the infimum is not attained).
Then
\begin{align*}
    \mathbb{E}_{Q_\varepsilon} g(\vx,\vx')
    -
    \mathbb{E}_{Q_{\mathrm{tr}}} g(\vx,\vx')
    &=
    \int g(\vx,\vx')\,\mathrm{d}Q_\varepsilon(\vx,\vx')
    -
    \int g(\vz,\vz')\,\mathrm{d}Q_{\mathrm{tr}}(\vz,\vz') \\
    &=
    \int
        \Big(
            g(\vx,\vx') - g(\vz,\vz')
        \Big)
        \,\mathrm{d}\pi^\star\big((\vx,\vx'),(\vz,\vz')\big).
\end{align*}
Taking absolute values and applying the Lipschitz property of $g$ yields
\begin{align*}
    \left|
        \mathbb{E}_{Q_\varepsilon} g(\vx,\vx')
        -
        \mathbb{E}_{Q_{\mathrm{tr}}} g(\vx,\vx')
    \right|
    &\le
    \int
        \big|
            g(\vx,\vx') - g(\vz,\vz')
        \big|
        \,\mathrm{d}\pi^\star\big((\vx,\vx'),(\vz,\vz')\big) \\
    &\le
    \int
        L\, d_{\mathrm{pair}}\big((\vx,\vx'),(\vz,\vz')\big)
        \,\mathrm{d}\pi^\star\big((\vx,\vx'),(\vz,\vz')\big) \\
    &= 
    L \int
        d_{\mathrm{pair}}\big((\vx,\vx'),(\vz,\vz')\big)
        \,\mathrm{d}\pi^\star\big((\vx,\vx'),(\vz,\vz')\big).
\end{align*}
By optimality of $\pi^\star$, the last integral is exactly $W_1(Q_\varepsilon,Q_{\mathrm{tr}})$,
hence
\begin{equation*}
    \big|
        \mathbb{E}_{Q_\varepsilon}
        \tilde\ell_{\mathrm{rank}}(h_\theta)
        -
        \mathbb{E}_{Q_{\mathrm{tr}}}
        \tilde\ell_{\mathrm{rank}}(h_\theta)
    \big|
    =
    \big|
        \mathbb{E}_{Q_\varepsilon} g
        -
        \mathbb{E}_{Q_{\mathrm{tr}}} g
    \big|
    \le
    L\,W_1(Q_\varepsilon,Q_{\mathrm{tr}}).
\end{equation*}
Since $\theta\in\Theta$ was arbitrary, the bound holds for all $\theta\in\Theta$,
which proves the lemma.
\end{proof}

\subsection{Proof of Corollary~\ref{corollary:marginal}}
\begin{proof}
% 我们先证明一个重要的引理，展示Wasserstein距离可以在样本对的边际分布上进行分解。
We first prove an important lemma, demonstrating that the Wasserstein distance can be decomposed over the marginal distributions of sample pairs.

\begin{lemma}
	\label{lemma:marginal}
	The pairwise Wasserstein distance satisfies
	\begin{equation*}
		W_1(Q_\varepsilon,Q_{\mathrm{tr}})
		\;\le\;
		W_1(\rho_\varepsilon,\mu_{\mathrm{tr}})
		+
		W_1(\rho_{>\varepsilon},\nu_{\mathrm{tr}}).
	\end{equation*}
\end{lemma}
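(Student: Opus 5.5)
The plan is to construct a coupling of $Q_\varepsilon$ and $Q_{\mathrm{tr}}$ explicitly as a product of two marginal couplings. We then use the fact that $d_{\mathrm{pair}}$ is additive across the two coordinates, so the transport cost splits into two independent pieces.

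We rely on the product structure $Q_\varepsilon=\rho_\varepsilon\otimes\rho_{>\varepsilon}$ and $Q_{\mathrm{tr}}=\mu_{\mathrm{tr}}\otimes\nu_{\mathrm{tr}}$, exactly as posited in the text before Corollary~\ref{corollary:marginal}. If $Q_{\mathrm{tr}}$ were not a product, the argument would need a gluing step. Taking the product decomposition as given avoids this.

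Fix $\eta>0$ and take a coupling $\pi_1\in\Gamma(\rho_\varepsilon,\mu_{\mathrm{tr}})$ on $\mathcal{X}\times\mathcal{X}$ with
\[
\int\|\vx-\vz\|_2\,\mathrm{d}\pi_1(\vx,\vz)\le W_1(\rho_\varepsilon,\mu_{\mathrm{tr}})+\eta.
\]
Similarly take $\pi_2\in\Gamma(\rho_{>\varepsilon},\nu_{\mathrm{tr}})$ with
\[
\int\|\vx'-\vz'\|_2\,\mathrm{d}\pi_2(\vx',\vz')\le W_1(\rho_{>\varepsilon},\nu_{\mathrm{tr}})+\eta.
\]
Both couplings can in fact be chosen optimal, since minimizers exist for Polish spaces, but $\eta$-optimal ones suffice. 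Define $\pi$ on $(\mathcal{X}\times\mathcal{X})^2$ as the law of $((\vx,\vx'),(\vz,\vz'))$ where $(\vx,\vz)\sim\pi_1$ and $(\vx',\vz')\sim\pi_2$ are independent.

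First we check that $\pi\in\Gamma(Q_\varepsilon,Q_{\mathrm{tr}})$. The marginal of $(\vx,\vx')$ is $\rho_\varepsilon\otimes\rho_{>\varepsilon}=Q_\varepsilon$ by independence. Likewise the marginal of $(\vz,\vz')$ is $\mu_{\mathrm{tr}}\otimes\nu_{\mathrm{tr}}=Q_{\mathrm{tr}}$. This verification on rectangles, extended by the $\pi$-$\lambda$ theorem, is routine.

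Next, by linearity of the integral and the additivity of $d_{\mathrm{pair}}$, the cost of $\pi$ splits:
\begin{align*}
\int d_{\mathrm{pair}}\,\mathrm{d}\pi
&=\int\|\vx-\vz\|_2\,\mathrm{d}\pi_1+\int\|\vx'-\vz'\|_2\,\mathrm{d}\pi_2\\
&\le W_1(\rho_\varepsilon,\mu_{\mathrm{tr}})+W_1(\rho_{>\varepsilon},\nu_{\mathrm{tr}})+2\eta.
\end{align*}
Since $W_1(Q_\varepsilon,Q_{\mathrm{tr}})$ is the infimum of this cost over all couplings, it is bounded by the right-hand side. Letting $\eta\to0$ gives the claim.

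The only delicate point is justifying that $Q_{\mathrm{tr}}$ factorizes as a product of its marginals. In reality, pairs drawn from $\mathcal{P}_S$ with $y_i>y_j$ are not independent across coordinates. I would state this factorization explicitly as the modeling assumption used in the decomposition displayed before Corollary~\ref{corollary:marginal}.

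Given the lemma, Corollary~\ref{corollary:marginal} follows by chaining three results: Theorem~\ref{thm:generalization}, the bound $\Delta_\varepsilon(\Theta)\le L\,W_1(Q_\varepsilon,Q_{\mathrm{tr}})$ from Lemma~\ref{lemma:wasserstein_bound} (applied with the Lipschitz surrogate loss), and Lemma~\ref{lemma:marginal}.
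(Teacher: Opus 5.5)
Your proof is correct and follows the paper's argument: combine (near-)optimal couplings of $(\rho_\varepsilon,\mu_{\mathrm{tr}})$ and $(\rho_{>\varepsilon},\nu_{\mathrm{tr}})$ into a product coupling of $Q_\varepsilon$ and $Q_{\mathrm{tr}}$, then use the additivity of $d_{\mathrm{pair}}$; your $\eta$-optimal couplings are a slightly more careful version of the paper's appeal to exact optimizers. One correction to your aside: if $Q_{\mathrm{tr}}$ does not factorize as $\mu_{\mathrm{tr}}\otimes\nu_{\mathrm{tr}}$, no gluing step rescues the bound, because the inequality can fail outright (take $\rho_\varepsilon=\mu_{\mathrm{tr}}$, $\rho_{>\varepsilon}=\nu_{\mathrm{tr}}$ and a dependent $Q_{\mathrm{tr}}$ with these marginals, so the right-hand side is $0$ while the left-hand side is not), so the product assumption is genuinely necessary, as you otherwise correctly flag.
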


\begin{proof}
We have
\begin{equation*}
    Q_\varepsilon = \rho_\varepsilon \otimes \rho_{>\varepsilon},
    \qquad
    Q_{\mathrm{tr}} = \mu_{\mathrm{tr}} \otimes \nu_{\mathrm{tr}},
\end{equation*}
and the pairwise metric
\begin{equation*}
    d_{\mathrm{pair}}\big((\vx,\vx'),(\vz,\vz')\big)
    := \|\vx-\vz\|_2 + \|\vx'-\vz'\|_2.
\end{equation*}
The 1-Wasserstein distance between $Q_\varepsilon$ and $Q_{\mathrm{tr}}$ is defined as
\begin{equation*}
    W_1(Q_\varepsilon,Q_{\mathrm{tr}})
    :=
    \inf_{\pi\in\Gamma(Q_\varepsilon,Q_{\mathrm{tr}})}
    \int
        d_{\mathrm{pair}}\big((\vx,\vx'),(\vz,\vz')\big)\,
        \mathrm{d}\pi\big((\vx,\vx'),(\vz,\vz')\big),
\end{equation*}
where $\Gamma(Q_\varepsilon,Q_{\mathrm{tr}})$ is the set of all couplings
with marginals $Q_\varepsilon$ and $Q_{\mathrm{tr}}$.

Let $\gamma_1$ be an optimal coupling between $\rho_\varepsilon$ and $\mu_{\mathrm{tr}}$, i.e.,
\begin{equation*}
    \gamma_1 \in \Gamma(\rho_\varepsilon,\mu_{\mathrm{tr}}),
    \qquad
    \int \|\vx-\vz\|_2\,\mathrm{d}\gamma_1(\vx,\vz)
    = W_1(\rho_\varepsilon,\mu_{\mathrm{tr}}),
\end{equation*}
and let $\gamma_2$ be an optimal coupling between $\rho_{>\varepsilon}$ and $\nu_{\mathrm{tr}}$, i.e.,
\begin{equation*}
    \gamma_2 \in \Gamma(\rho_{>\varepsilon},\nu_{\mathrm{tr}}),
    \qquad
    \int \|\vx'-\vz'\|_2\,\mathrm{d}\gamma_2(\vx',\vz')
    = W_1(\rho_{>\varepsilon},\nu_{\mathrm{tr}}).
\end{equation*}

Define a coupling $\pi$ on $(\mathcal{X}\times\mathcal{X})^2$ as the product measure
\begin{equation*}
    \pi\big((\vx,\vx'),(\vz,\vz')\big)
    := \gamma_1(\vx,\vz)\,\gamma_2(\vx',\vz').
\end{equation*}
We now verify that $\pi \in \Gamma(Q_\varepsilon,Q_{\mathrm{tr}})$.
For any measurable $A,B \subseteq \mathcal{X}\times\mathcal{X}$,
\begin{align*}
    \pi\big((\vx,\vx')\in A\big)
    &= \int \mathbf{1}\{(\vx,\vx')\in A\}\,
       \gamma_1(\vx,\vz)\,\gamma_2(\vx',\vz')\,
       \mathrm{d}\vx\,\mathrm{d}\vz\,\mathrm{d}\vx'\,\mathrm{d}\vz' \\
    &= \left(\int \mathbf{1}\{\vx\in A_1\}\,\mathrm{d}\rho_\varepsilon(\vx)\right)
       \left(\int \mathbf{1}\{\vx'\in A_2\}\,\mathrm{d}\rho_{>\varepsilon}(\vx')\right) \\
    &= Q_\varepsilon(A),
\end{align*}
where in the second line we used the product structure and the fact that the first marginal of
$\gamma_1$ is $\rho_\varepsilon$ and the first marginal of $\gamma_2$ is $\rho_{>\varepsilon}$.
A similar calculation shows that the marginal of $\pi$ on $(\vz,\vz')$ is $Q_{\mathrm{tr}}$,
since the second marginals of $\gamma_1$ and $\gamma_2$ are $\mu_{\mathrm{tr}}$
and $\nu_{\mathrm{tr}}$, respectively.
Thus $\pi$ is indeed a valid coupling between $Q_\varepsilon$ and $Q_{\mathrm{tr}}$.

Next, we compute the transportation cost of $\pi$:
\begin{align*}
    \int
        d_{\mathrm{pair}}\big((\vx,\vx'),(\vz,\vz')\big)
        \,\mathrm{d}\pi\big((\vx,\vx'),(\vz,\vz')\big)
    &=
    \int
        \big(
            \|\vx-\vz\|_2 + \|\vx'-\vz'\|_2
        \big)
        \,\mathrm{d}\gamma_1(\vx,\vz)\,\mathrm{d}\gamma_2(\vx',\vz') \\
    &=
    \int
        \|\vx-\vz\|_2
        \,\mathrm{d}\gamma_1(\vx,\vz)
    +
    \int
        \|\vx'-\vz'\|_2
        \,\mathrm{d}\gamma_2(\vx',\vz') \\
    &=
    W_1(\rho_\varepsilon,\mu_{\mathrm{tr}})
    +
    W_1(\rho_{>\varepsilon},\nu_{\mathrm{tr}}).
\end{align*}
Since $W_1(Q_\varepsilon,Q_{\mathrm{tr}})$ is the infimum of the transportation cost
over all couplings, we conclude that
\begin{equation*}
    W_1(Q_\varepsilon,Q_{\mathrm{tr}})
    \;\le\;
    \int
        d_{\mathrm{pair}}\big((\vx,\vx'),(\vz,\vz')\big)
        \,\mathrm{d}\pi\big((\vx,\vx'),(\vz,\vz')\big)
    =
    W_1(\rho_\varepsilon,\mu_{\mathrm{tr}})
    +
    W_1(\rho_{>\varepsilon},\nu_{\mathrm{tr}}),
\end{equation*}
which proves the desired inequality.
\end{proof}

Under the assumptions of Theorem~\ref{thm:generalization} and
Assumption~\ref{assump:lipschitz}, Lemma~\ref{lemma:wasserstein_bound} states that,
with probability at least $1-\delta$, every $\theta \in \Theta$ satisfies
\begin{equation}
    \mathcal{E}^{\mathrm{rank}}_\varepsilon(\theta)
    \;\le\;
    \widehat{\mathcal{E}}^{\mathrm{rank}}_{\mathrm{tr}}(\theta)
    +
    2\,\hat{\mathfrak{R}}_Z(\mathcal{F})
    +
    3\sqrt{\frac{\log(1/\delta)}{2m}}
    +
    L\,W_1(Q_\varepsilon, Q_{\mathrm{tr}}),
    \label{eq:corr-w1-start}
\end{equation}
where $W_1(Q_\varepsilon,Q_{\mathrm{tr}})$ is the 1-Wasserstein distance between the
target pair distribution $Q_\varepsilon$ and the training pair distribution
$Q_{\mathrm{tr}}$ on $(\mathcal{X}\times\mathcal{X},d_{\mathrm{pair}})$.

Next, we invoke Lemma~\ref{lemma:marginal}, which is based on the product-structure.
That lemma states that if
\begin{equation*}
    Q_\varepsilon = \rho_\varepsilon \otimes \rho_{>\varepsilon},
    \qquad
    Q_{\mathrm{tr}} = \mu_{\mathrm{tr}} \otimes \nu_{\mathrm{tr}},
\end{equation*}
then the pairwise Wasserstein distance admits the upper bound
\begin{equation}
    W_1(Q_\varepsilon,Q_{\mathrm{tr}})
    \;\le\;
    W_1(\rho_\varepsilon,\mu_{\mathrm{tr}})
    +
    W_1(\rho_{>\varepsilon},\nu_{\mathrm{tr}}).
    \label{eq:marginal-decomp}
\end{equation}

Substituting the decomposition~\eqref{eq:marginal-decomp} into the bound
\eqref{eq:corr-w1-start} directly yields, for all $\theta\in\Theta$,
\begin{equation*}
    \mathcal{E}^{\mathrm{rank}}_\varepsilon(\theta)
    \;\le\;
    \widehat{\mathcal{E}}^{\mathrm{rank}}_{\mathrm{tr}}(\theta)
    +
    2\,\hat{\mathfrak{R}}_Z(\mathcal{F})
    +
    3\sqrt{\frac{\log(1/\delta)}{2m}}
    +
    L\Big(
        W_1(\rho_\varepsilon,\mu_{\mathrm{tr}})
        +
        W_1(\rho_{>\varepsilon},\nu_{\mathrm{tr}})
    \Big).
\end{equation*}
This is exactly the claimed inequality in Corollary~\ref{corollary:marginal}.
\end{proof}

%%%%%%

\subsection{Proof of Corollary~\ref{corollary:geometry}}
\begin{proof}
We start with a lemma that shows that the Wasserstein distance between the near-optimal distribution $\rho_\varepsilon$ and $\mu_{\mathrm{tr}}$ is controlled by the average distance from $\rho_\varepsilon$ to the manifold $\mathcal{M}$.

\begin{lemma}
\label{lemma:manifold_distance}
Let $\mathcal{M}\subseteq\mathcal{X}$ denote the support of $\mu_{\mathrm{tr}}$. For any $\vx \in \mathcal{X}$, define the distance to the data manifold as
\begin{equation*}
    \mathrm{dist}(\vx,\mathcal{M})
    :=
    \inf_{\vz\in\mathcal{M}}\|\vx - \vz\|_2.
\end{equation*}
Then, we have
\begin{equation*}
    W_1(\rho_\varepsilon,\mu_{\mathrm{tr}})
    \le
    \mathbb{E}_{\vx\sim\rho_\varepsilon}
    \big[
        \mathrm{dist}(\vx,\mathcal{M})
    \big]
    +
    C_{\mathrm{calib}},
\end{equation*}
where $C_{\mathrm{calib}}$ is a calibration term depending only on the internal spread of $\mathcal{M}$ under $\mu_{\mathrm{tr}}$.
\end{lemma}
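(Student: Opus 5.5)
We bound $W_1(\rho_\varepsilon,\mu_{\mathrm{tr}})$ by building one explicit coupling in two stages. The first stage moves each near-optimal design onto the data manifold. The second stage redistributes the projected mass according to $\mu_{\mathrm{tr}}$. We assume $\mathcal{M}$ is closed, so that a measurable nearest-point selection $P:\mathcal{X}\to\mathcal{M}$ exists with $\|\vx-P(\vx)\|_2=\mathrm{dist}(\vx,\mathcal{M})$. If $\mathcal{M}$ is not closed, we use an $\eta$-approximate selector and let $\eta\to 0$ at the end.

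Define the pushforward $\tilde\rho:=P_{\#}\rho_\varepsilon$, a probability measure supported on $\mathcal{M}$. By the triangle inequality for $W_1$,
\begin{equation*}
W_1(\rho_\varepsilon,\mu_{\mathrm{tr}})\le W_1(\rho_\varepsilon,\tilde\rho)+W_1(\tilde\rho,\mu_{\mathrm{tr}}).
\end{equation*}
For the first term we use the deterministic coupling $(\mathrm{id},P)_{\#}\rho_\varepsilon\in\Gamma(\rho_\varepsilon,\tilde\rho)$. Its transport cost is exactly $\mathbb{E}_{\vx\sim\rho_\varepsilon}\|\vx-P(\vx)\|_2=\mathbb{E}_{\vx\sim\rho_\varepsilon}\mathrm{dist}(\vx,\mathcal{M})$. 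This is the same primal-coupling argument used in Lemma~\ref{lemma:wasserstein_bound} and Lemma~\ref{lemma:marginal}, now applied on $\mathcal{X}$ rather than on pairs. It yields the geometric term of the statement.

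The second term is a transport cost between two measures both living on $\mathcal{M}$. We define
\begin{equation*}
C_{\mathrm{calib}}:=\sup\bigl\{W_1(\nu,\mu_{\mathrm{tr}}):\ \nu \text{ a probability measure with } \mathrm{supp}(\nu)\subseteq\mathcal{M}\bigr\}.
\end{equation*}
This quantity depends only on $\mathcal{M}$ and $\mu_{\mathrm{tr}}$, and is therefore a calibration term in the sense of the statement. To make it explicit, bound it with the independent coupling $\nu\otimes\mu_{\mathrm{tr}}$:
\begin{equation*}
W_1(\nu,\mu_{\mathrm{tr}})\le\sup_{\vz\in\mathcal{M}}\mathbb{E}_{\vz'\sim\mu_{\mathrm{tr}}}\|\vz-\vz'\|_2\le\mathrm{diam}(\mathcal{M}).
\end{equation*}
This expresses the internal spread of $\mathcal{M}$ under $\mu_{\mathrm{tr}}$, and is finite whenever $\mathcal{M}$ is bounded, for example when it is the support of an empirical or compactly supported training marginal. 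Combining the two bounds gives the lemma. Corollary~\ref{corollary:geometry} then follows by substituting this bound into Corollary~\ref{corollary:marginal}, multiplied by $L$.

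\textbf{Main obstacle.} The hardest step is making $C_{\mathrm{calib}}$ meaningful, not merely finite. The supremum form is clean, but it can be as large as the diameter of $\mathcal{M}$. A sharper choice would take $\nu=\tilde\rho$ specifically, which makes the constant depend on $\rho_\varepsilon$, contrary to the statement's phrasing. We therefore keep the data-only version above and note that it is tight when $\mu_{\mathrm{tr}}$ is spread uniformly over $\mathcal{M}$.

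The secondary technical point is measurability of $P$. This is handled by a measurable selection theorem for closed-valued correspondences (Kuratowski--Ryll-Nardzewski), or by the approximate-selector limit described above.
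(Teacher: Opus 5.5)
Your proof is correct and follows the paper's argument: you push $\rho_\varepsilon$ forward under a measurable nearest-point projection onto $\mathcal{M}$ and pay $\mathbb{E}_{\vx\sim\rho_\varepsilon}\mathrm{dist}(\vx,\mathcal{M})$ via the deterministic coupling; the triangle inequality then leaves only a within-$\mathcal{M}$ term for $C_{\mathrm{calib}}$ (note that a support is automatically closed, so your non-closed fallback is never needed). The only difference is in that last term: the paper sets $C_{\mathrm{calib}}=D_{\mathcal{M}}$ through a total-variation bound, while your independent coupling gives $\sup_{\vz\in\mathcal{M}}\mathbb{E}_{\vz'\sim\mu_{\mathrm{tr}}}\|\vz-\vz'\|_2\le D_{\mathcal{M}}$, which is slightly sharper and genuinely depends on $\mu_{\mathrm{tr}}$ as the statement's wording suggests.
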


\begin{proof}
Recall that the training marginal
$\mu_{\mathrm{tr}}$ is supported on a closed set $\mathcal{M}\subseteq\mathcal{X}$,
and for every $\vx\in\mathcal{X}$ we define
\begin{equation*}
    \mathrm{dist}(\vx,\mathcal{M})
    :=
    \inf_{\vz\in\mathcal{M}}\|\vx-\vz\|_2.
\end{equation*}
We additionally assume that $\mathcal{M}$ has finite diameter
\begin{equation*}
    D_{\mathcal{M}}
    :=
    \sup_{\vz,\vz'\in\mathcal{M}}\|\vz-\vz'\|_2
    < \infty,
\end{equation*}
so that distances within $\mathcal{M}$ are uniformly bounded.

The proof proceeds in two steps: (i) project $\rho_\varepsilon$ onto $\mathcal{M}$
to control its distance to a manifold-supported distribution, and
(ii) bound the remaining mismatch between this projected distribution and $\mu_{\mathrm{tr}}$.

\textbf{Step 1: Projection onto the data manifold.}
Since $\mathcal{M}$ is closed, for each $\vx\in\mathcal{X}$ there exists at least one
projection point $\Pi(\vx)\in\mathcal{M}$ such that
\begin{equation*}
    \|\vx - \Pi(\vx)\|_2 = \mathrm{dist}(\vx,\mathcal{M}).
\end{equation*}
Choose any measurable selection of such a projection map
$\Pi:\mathcal{X}\to\mathcal{M}$.\footnote{A measurable selector exists under mild
regularity conditions on $\mathcal{X}$ and $\mathcal{M}$, which are standard in this context.}
Define the projected distribution
\begin{equation*}
    \widetilde{\mu}
    :=
    \Pi_{\#}\rho_\varepsilon,
\end{equation*}
i.e., the pushforward of $\rho_\varepsilon$ under $\Pi$:
for any measurable $A\subseteq\mathcal{M}$,
\begin{equation*}
    \widetilde{\mu}(A)
    =
    \rho_\varepsilon\big(\{\vx\in\mathcal{X} : \Pi(\vx)\in A\}\big).
\end{equation*}

Consider the coupling $\gamma$ between $\rho_\varepsilon$ and $\widetilde{\mu}$
defined by
\begin{equation*}
    \gamma(\mathrm{d}\vx,\mathrm{d}\vz)
    :=
    \rho_\varepsilon(\mathrm{d}\vx)\,\delta_{\Pi(\vx)}(\mathrm{d}\vz),
\end{equation*}
where $\delta_{\Pi(\vx)}$ is the Dirac measure at $\Pi(\vx)$.
By construction, the first marginal of $\gamma$ is $\rho_\varepsilon$
and the second marginal is $\widetilde{\mu}$, so $\gamma\in\Gamma(\rho_\varepsilon,\widetilde{\mu})$.
Then
\begin{align*}
    W_1(\rho_\varepsilon,\widetilde{\mu})
    &\le
    \int \|\vx-\vz\|_2\,\mathrm{d}\gamma(\vx,\vz) \\
    &=
    \int \|\vx-\Pi(\vx)\|_2\,\mathrm{d}\rho_\varepsilon(\vx) \\
    &=
    \mathbb{E}_{\vx\sim\rho_\varepsilon}
    \big[\mathrm{dist}(\vx,\mathcal{M})\big].
\end{align*}

\textbf{Step 2: Calibrating within the manifold.}
We now bound $W_1(\widetilde{\mu},\mu_{\mathrm{tr}})$.
Observe that both $\widetilde{\mu}$ and $\mu_{\mathrm{tr}}$ are supported on $\mathcal{M}$.
For any two probability measures $\nu_1,\nu_2$ supported on a metric space $(\mathcal{M},\|\cdot\|_2)$
with finite diameter $D_{\mathcal{M}}$, the 1-Wasserstein distance satisfies
\begin{equation*}
    W_1(\nu_1,\nu_2)
    \le
    D_{\mathcal{M}}\,\mathrm{TV}(\nu_1,\nu_2),
\end{equation*}
where $\mathrm{TV}(\nu_1,\nu_2)$ is the total variation distance between $\nu_1$ and $\nu_2$.
Indeed, for any coupling $\pi$ of $(X,Y)\sim(\nu_1,\nu_2)$,
\begin{equation*}
    \int \|x-y\|_2\,\mathrm{d}\pi(x,y)
    \le
    D_{\mathcal{M}}\int \mathbf{1}\{x\neq y\}\,\mathrm{d}\pi(x,y)
    =
    D_{\mathcal{M}}\mathbb{P}_\pi[X\neq Y],
\end{equation*}
and taking the infimum over $\pi$ yields
$W_1(\nu_1,\nu_2) \le D_{\mathcal{M}}\inf_\pi \mathbb{P}_\pi[X\neq Y]
= D_{\mathcal{M}}\,\mathrm{TV}(\nu_1,\nu_2)$.
In particular, since $\mathrm{TV}(\nu_1,\nu_2)\le 1$, we have
\begin{equation*}
    W_1(\nu_1,\nu_2) \le D_{\mathcal{M}}.
\end{equation*}

Applying this general bound with $\nu_1=\widetilde{\mu}$ and $\nu_2=\mu_{\mathrm{tr}}$ gives
\begin{equation*}
    W_1(\widetilde{\mu},\mu_{\mathrm{tr}})
    \le D_{\mathcal{M}}.
\end{equation*}
We can therefore define the calibration term
\begin{equation*}
    C_{\mathrm{calib}}
    :=
    D_{\mathcal{M}},
\end{equation*}
which depends only on the internal spread (diameter) of $\mathcal{M}$ under $\mu_{\mathrm{tr}}$.

\textbf{Step 3: Triangle inequality.}
Finally, by the triangle inequality for the Wasserstein distance,
\begin{equation*}
    W_1(\rho_\varepsilon,\mu_{\mathrm{tr}})
    \le
    W_1(\rho_\varepsilon,\widetilde{\mu})
    +
    W_1(\widetilde{\mu},\mu_{\mathrm{tr}}).
\end{equation*}
Combining the bounds from Steps 1 and 2 yields
\begin{equation*}
    W_1(\rho_\varepsilon,\mu_{\mathrm{tr}})
    \le
    \mathbb{E}_{\vx\sim\rho_\varepsilon}
    \big[\mathrm{dist}(\vx,\mathcal{M})\big]
    +
    C_{\mathrm{calib}},
\end{equation*}
which is exactly the claimed inequality.
\end{proof}

This lemma reveals that the primary contributor to the Wasserstein distance is the expected distance of near-optimal designs from the data manifold. The calibration term $C_{\mathrm{calib}}$ is typically small in practice, as long as the training pair sampler sufficiently covers $\mathcal{M}$.
Combining the above lemma with Corollary~\ref{corollary:marginal} immediately yields our next result.

Under the assumptions of Corollary~\ref{corollary:marginal} states that,
with probability at least $1-\delta$, every $\theta\in\Theta$ satisfies
\begin{equation}
    \mathcal{E}^{\mathrm{rank}}_\varepsilon(\theta)
    \;\le\;
    \widehat{\mathcal{E}}^{\mathrm{rank}}_{\mathrm{tr}}(\theta)
    +
    2\,\hat{\mathfrak{R}}_Z(\mathcal{F})
    +
    3\sqrt{\frac{\log(1/\delta)}{2m}}
    +
    L\Big(
        W_1(\rho_\varepsilon,\mu_{\mathrm{tr}})
        +
        W_1(\rho_{>\varepsilon},\nu_{\mathrm{tr}})
    \Big).
    \label{eq:geom-start}
\end{equation}

Next, we invoke Lemma~\ref{lemma:manifold_distance}.  
That lemma gives a geometric upper bound on the Wasserstein distance between the
near-optimal distribution $\rho_\varepsilon$ and the training marginal $\mu_{\mathrm{tr}}$:
\begin{equation}
    W_1(\rho_\varepsilon,\mu_{\mathrm{tr}})
    \;\le\;
    \mathbb{E}_{\vx\sim\rho_\varepsilon}
    \big[\mathrm{dist}(\vx,\mathcal{M})\big]
    +
    C_{\mathrm{calib}},
    \label{eq:geom-manifold}
\end{equation}
where $\mathcal{M}$ is the data manifold (support of $\mu_{\mathrm{tr}}$),
$\mathrm{dist}(\vx,\mathcal{M})$ is the Euclidean distance from $\vx$ to $\mathcal{M}$,
and $C_{\mathrm{calib}}$ is a constant depending only on the internal spread of
$\mathcal{M}$.

Substituting the inequality~\eqref{eq:geom-manifold} into the bound~\eqref{eq:geom-start}
yields
\begin{align*}
    \mathcal{E}^{\mathrm{rank}}_\varepsilon(\theta)
    &\le
    \widehat{\mathcal{E}}^{\mathrm{rank}}_{\mathrm{tr}}(\theta)
    +
    2\,\hat{\mathfrak{R}}_Z(\mathcal{F})
    +
    3\sqrt{\frac{\log(1/\delta)}{2m}}
    +
    L\Big(
        \mathbb{E}_{\vx\sim\rho_\varepsilon}
        \big[\mathrm{dist}(\vx,\mathcal{M})\big]
        +
        C_{\mathrm{calib}}
        +
        W_1(\rho_{>\varepsilon},\nu_{\mathrm{tr}})
    \Big).
\end{align*}
This is exactly the inequality stated in Corollary~\ref{corollary:geometry}.
Therefore, the claimed geometric OOD-aware generalization bound holds.
\end{proof}

\section{Additional Experimental Results}
\label{appendix:additional-exp}

In this section, we present additional experimental results to analyze the sensitivity of the proposed Distribution-Aware Ranking (DAR) method with respect to its key hyperparameters, including the quantile level $\varepsilon$, the intra-region sampling ratio $\lambda$, and the ranking margin $\beta$. Across all Design-Bench tasks, we observe that DAR exhibits stable performance over a reasonably wide range of hyperparameter choices, indicating that its effectiveness does not rely on delicate tuning. In particular, varying $\varepsilon$ primarily trades off between coverage of near-optimal regions and robustness to noise: moderate values (e.g., $\varepsilon \in [0.1, 0.3]$) consistently achieve strong performance, while overly small quantiles lead to insufficient training signal and overly large quantiles dilute the focus on high-quality designs. The intra-region ratio $\lambda$ plays a secondary but regularizing role: small positive values improve stability by encouraging local consistency within the near-optimal region, whereas setting $\lambda=0$ slightly degrades performance due to over-emphasis on cross-region comparisons. Finally, the margin parameter $\beta$ mainly affects optimization dynamics through gradient scale; we find that intermediate margins yield the best balance between convergence speed and ranking robustness. Overall, these results corroborate the theoretical analysis in Section~3, showing that performance gains stem primarily from reducing distributional mismatch rather than precise hyperparameter calibration, and further demonstrate the practical robustness of DAR for offline MBO.

% ============== 表格1: Interpolation 消融 (固定 margin=0.4, pos=0.2) ==============
\begin{table}[ht]
\centering
\caption{Ablation study on interpolation coefficient $\lambda$.}
\label{tab:ablation_int}
\resizebox{\textwidth}{!}{%
\begin{tabular}{l|ccccc|c}
\toprule
$\lambda$ & Ant & D'Kitty & Superconductor & TF-Bind-8 & TF-Bind-10 & Avg. Rank \\
\midrule
0.0 & 0.900 $\pm$ 0.032 & \textbf{\textcolor{violet}{0.950 $\pm$ 0.012}} & 0.496 $\pm$ 0.009 & \textbf{\textcolor{blue}{0.989 $\pm$ 0.003}} & 0.670 $\pm$ 0.069 & \textbf{\textcolor{violet}{3.4}} \\
0.1 & \textbf{\textcolor{blue}{0.941 $\pm$ 0.013}} & \textbf{\textcolor{blue}{0.959 $\pm$ 0.006}} & \textbf{\textcolor{violet}{0.506 $\pm$ 0.008}} & \textbf{\textcolor{violet}{0.988 $\pm$ 0.008}} & \textbf{\textcolor{violet}{0.686 $\pm$ 0.056}} & \textbf{\textcolor{blue}{1.6}} \\
0.2 & 0.914 $\pm$ 0.032 & 0.949 $\pm$ 0.007 & \textbf{\textcolor{blue}{0.507 $\pm$ 0.009}} & 0.979 $\pm$ 0.019 & 0.674 $\pm$ 0.058 & 3.4 \\
0.3 & 0.923 $\pm$ 0.024 & 0.946 $\pm$ 0.006 & 0.496 $\pm$ 0.029 & 0.984 $\pm$ 0.008 & 0.647 $\pm$ 0.036 & 3.6 \\
0.4 & 0.919 $\pm$ 0.013 & 0.943 $\pm$ 0.005 & 0.476 $\pm$ 0.047 & 0.978 $\pm$ 0.014 & 0.642 $\pm$ 0.035 & 5.4 \\
0.5 & \textbf{\textcolor{violet}{0.937 $\pm$ 0.018}} & 0.943 $\pm$ 0.005 & 0.473 $\pm$ 0.050 & 0.981 $\pm$ 0.008 & \textbf{\textcolor{blue}{0.699 $\pm$ 0.096}} & 3.6 \\
\bottomrule
\end{tabular}%
}
\end{table}

% ============== 表格2: Margin 消融 (固定 int=0.1, pos=0.2) ==============
\begin{table}[ht]
\centering
\caption{Ablation study on margin $\beta$.}
\label{tab:ablation_margin}
\resizebox{\textwidth}{!}{%
\begin{tabular}{l|ccccc|c}
\toprule
$\beta$ & Ant & D'Kitty & Superconductor & TF-Bind-8 & TF-Bind-10 & Avg. Rank \\
\midrule
0.0 & \textbf{\textcolor{violet}{0.936 $\pm$ 0.025}} & \textbf{\textcolor{blue}{0.960 $\pm$ 0.011}} & \textbf{\textcolor{violet}{0.505 $\pm$ 0.008}} & 0.982 $\pm$ 0.015 & \textbf{\textcolor{violet}{0.672 $\pm$ 0.060}} & \textbf{\textcolor{violet}{2.4}} \\
0.1 & 0.919 $\pm$ 0.036 & 0.951 $\pm$ 0.008 & 0.498 $\pm$ 0.011 & 0.975 $\pm$ 0.017 & 0.640 $\pm$ 0.045 & 5.6 \\
0.2 & 0.920 $\pm$ 0.022 & 0.953 $\pm$ 0.010 & 0.504 $\pm$ 0.005 & 0.983 $\pm$ 0.010 & 0.641 $\pm$ 0.029 & 3.8 \\
0.3 & 0.927 $\pm$ 0.022 & 0.952 $\pm$ 0.008 & 0.503 $\pm$ 0.007 & \textbf{\textcolor{violet}{0.987 $\pm$ 0.004}} & 0.659 $\pm$ 0.025 & 3.6 \\
0.4 & \textbf{\textcolor{blue}{0.941 $\pm$ 0.013}} & \textbf{\textcolor{violet}{0.959 $\pm$ 0.006}} & \textbf{\textcolor{blue}{0.506 $\pm$ 0.008}} & \textbf{\textcolor{blue}{0.988 $\pm$ 0.008}} & \textbf{\textcolor{blue}{0.686 $\pm$ 0.056}} & \textbf{\textcolor{blue}{1.2}} \\
0.5 & 0.931 $\pm$ 0.012 & 0.948 $\pm$ 0.005 & 0.503 $\pm$ 0.006 & 0.983 $\pm$ 0.016 & 0.634 $\pm$ 0.028 & 4.4 \\
\bottomrule
\end{tabular}%
}
\end{table}

% ============== 表格3: Positive Ratio 消融 (固定 int=0.1, margin=0.4) ==============
\begin{table}[ht]
\centering
\caption{Ablation study on positive ratio $\varepsilon$.}
\label{tab:ablation_pos}
\resizebox{\textwidth}{!}{%
\begin{tabular}{l|ccccc|c}
\toprule
$\varepsilon$ & Ant & D'Kitty & Superconductor & TF-Bind-8 & TF-Bind-10 & Avg. Rank \\
\midrule
0.05 & 0.924 $\pm$ 0.015 & 0.944 $\pm$ 0.006 & 0.405 $\pm$ 0.037 & 0.941 $\pm$ 0.018 & 0.654 $\pm$ 0.029 & 4.6 \\
0.1 & 0.930 $\pm$ 0.012 & 0.942 $\pm$ 0.008 & 0.498 $\pm$ 0.012 & 0.961 $\pm$ 0.033 & 0.674 $\pm$ 0.054 & 4.0 \\
0.15 & \textbf{\textcolor{violet}{0.931 $\pm$ 0.021}} & 0.946 $\pm$ 0.007 & \textbf{\textcolor{violet}{0.505 $\pm$ 0.011}} & 0.973 $\pm$ 0.021 & 0.674 $\pm$ 0.055 & 2.6 \\
0.2 & \textbf{\textcolor{blue}{0.941 $\pm$ 0.013}} & \textbf{\textcolor{violet}{0.959 $\pm$ 0.006}} & \textbf{\textcolor{blue}{0.506 $\pm$ 0.008}} & \textbf{\textcolor{blue}{0.988 $\pm$ 0.008}} & \textbf{\textcolor{violet}{0.686 $\pm$ 0.056}} & \textbf{\textcolor{blue}{1.4}} \\
0.25 & 0.912 $\pm$ 0.018 & \textbf{\textcolor{blue}{0.959 $\pm$ 0.008}} & 0.503 $\pm$ 0.009 & \textbf{\textcolor{violet}{0.978 $\pm$ 0.013}} & \textbf{\textcolor{blue}{0.689 $\pm$ 0.155}} & \textbf{\textcolor{violet}{2.4}} \\
\bottomrule
\end{tabular}%
}
\end{table}

\end{document}